\newcommand{\FedRep}{\method{FedRep}\xspace}
\newcommand{\vct}[1]{\boldsymbol{#1}} 
\newcommand{\field}[1]{\mathbb{#1}}
\newcommand{\R}{\field{R}} 
\newcommand{\ProbOpr}[1]{\mathbb{#1}}
\newcommand{\expect}[2]{%
\ifthenelse{\equal{#2}{}}{\ProbOpr{E}_{#1}}
{\ifthenelse{\equal{#1}{}}{\ProbOpr{E}\left[#2\right]}{\ProbOpr{E}_{#1}\left[#2\right]}}} 
\DeclareMathOperator{\argmin}{arg\,min}
\newcommand{\vtheta}{\vct{\theta}}
\newcommand{\sS}{\mathcal{S}}
\newcommand{\sC}{\mathcal{C}}
\newcommand{\sD}{\mathcal{D}}
\newcommand{\eat}[1]{}
\newcommand{\method}[1]{\textsc{#1}}
\theoremstyle{plain}
\newtheorem{theorem}{Theorem}[section]
\newtheorem{lemma}[theorem]{Lemma}
\theoremstyle{definition}
\theoremstyle{remark}
\newtheorem{remark}[theorem]{Remark}
\title{Jigsaw Game: Federated Clustering}
\author{\name Jinxuan Xu \email jinxuan.xu@rutgers.edu \\
      \addr Department of Electrical and Computer Engineering\\
      Rutgers University
      \ANDAUTHOR
      \name Hong-You Chen \email chen.9301@osu.edu \\
      \addr Department of Computer Science and Engineering\\
      The Ohio State University
      \ANDAUTHOR
      \name Wei-Lun Chao \email chao.209@osu.edu \\
      \addr Department of Computer Science and Engineering\\
      The Ohio State University
      \ANDAUTHOR
      \name Yuqian Zhang \email yqz.zhang@rutgers.edu\\
      \addr Department of Electrical and Computer Engineering\\
      Rutgers University}
\newcommand{\FedAvg}{\method{FedAvg}\xspace}
\newcommand{\FeCA}{\method{FeCA}\xspace}
\newcommand{\DeepFeCA}{\method{DeepFeCA}\xspace}
\newcommand{\DeepCluster}{\method{DeepCluster}\xspace}
\newcommand{\paren}[1]{ \left( #1 \right) }
\newcommand{\norm}[2]{\left\| #1 \right\|_{#2}}
\definecolor{RED}{rgb}{1,0,0}\definecolor{BLUE}{rgb}{0,0,1}
\providecommand{\DIFaddbegin}{} 
\providecommand{\DIFaddend}{} 
\providecommand{\DIFdelbegin}{} 
\providecommand{\DIFdelend}{}
\providecommand{\DIFaddbeginFL}{} 
\providecommand{\DIFaddendFL}{} 
\providecommand{\DIFdelbeginFL}{} 
\providecommand{\DIFdelendFL}{}
\newcommand{\DIFscaledelfig}{0.5}
\newsavebox{\DIFdelgraphicsbox} 
\newlength{\DIFdelgraphicswidth} 
\newlength{\DIFdelgraphicsheight} 
\LetLtxMacro{\DIFOincludegraphics}{\includegraphics} 
\newcommand{\DIFaddincludegraphics}[2][]{{\color{blue}\fbox{\DIFOincludegraphics[#1]{#2}}}} 
\newcommand{\DIFdelincludegraphics}[2][]{
\sbox{\DIFdelgraphicsbox}{\DIFOincludegraphics[#1]{#2}}
\settoboxwidth{\DIFdelgraphicswidth}{\DIFdelgraphicsbox} 
\settoboxtotalheight{\DIFdelgraphicsheight}{\DIFdelgraphicsbox} 
\scalebox{\DIFscaledelfig}{
\parbox[b]{\DIFdelgraphicswidth}{\usebox{\DIFdelgraphicsbox}\\[-\baselineskip] \rule{\DIFdelgraphicswidth}{0em}}\llap{\resizebox{\DIFdelgraphicswidth}{\DIFdelgraphicsheight}{
\setlength{\unitlength}{\DIFdelgraphicswidth}
\begin{picture}(1,1)
\thicklines\linethickness{2pt} 
{\color[rgb]{1,0,0}\put(0,0){\framebox(1,1){}}}
{\color[rgb]{1,0,0}\put(0,0){\line( 1,1){1}}}
{\color[rgb]{1,0,0}\put(0,1){\line(1,-1){1}}}
\end{picture}
}\hspace*{3pt}}} 
} 
\LetLtxMacro{\DIFOaddbegin}{\DIFaddbegin} 
\LetLtxMacro{\DIFOaddend}{\DIFaddend} 
\LetLtxMacro{\DIFOdelbegin}{\DIFdelbegin} 
\LetLtxMacro{\DIFOdelend}{\DIFdelend} 
\DeclareRobustCommand{\DIFaddbegin}{\DIFOaddbegin \let\includegraphics\DIFaddincludegraphics} 
\DeclareRobustCommand{\DIFaddend}{\DIFOaddend \let\includegraphics\DIFOincludegraphics} 
\DeclareRobustCommand{\DIFdelbegin}{\DIFOdelbegin \let\includegraphics\DIFdelincludegraphics} 
\DeclareRobustCommand{\DIFdelend}{\DIFOaddend \let\includegraphics\DIFOincludegraphics} 
\LetLtxMacro{\DIFOaddbeginFL}{\DIFaddbeginFL} 
\LetLtxMacro{\DIFOaddendFL}{\DIFaddendFL} 
\LetLtxMacro{\DIFOdelbeginFL}{\DIFdelbeginFL} 
\LetLtxMacro{\DIFOdelendFL}{\DIFdelendFL} 
\DeclareRobustCommand{\DIFaddbeginFL}{\DIFOaddbeginFL \let\includegraphics\DIFaddincludegraphics} 
\DeclareRobustCommand{\DIFaddendFL}{\DIFOaddendFL \let\includegraphics\DIFOincludegraphics} 
\DeclareRobustCommand{\DIFdelbeginFL}{\DIFOdelbeginFL \let\includegraphics\DIFdelincludegraphics} 
\DeclareRobustCommand{\DIFdelendFL}{\DIFOaddendFL \let\includegraphics\DIFOincludegraphics}
\begin{document}

\maketitle

\begin{abstract}

Federated learning has recently garnered significant attention, especially within the domain of supervised learning. However, despite the abundance of unlabeled data on end-users, unsupervised learning problems such as clustering in the federated setting remain underexplored. In this paper, we investigate the federated clustering problem, with a focus on federated $k$-means. We outline the challenge posed by its non-convex objective and data heterogeneity in the federated framework. To tackle these challenges, we adopt a new perspective by studying the structures of local solutions in $k$-means and propose a one-shot algorithm called \FeCA (Federated Centroid Aggregation). \FeCA adaptively refines local solutions on clients, then aggregates these refined solutions to recover the global solution of the entire dataset in a single round. We empirically demonstrate the robustness of \FeCA under various federated scenarios on both synthetic and real-world data. Additionally, we extend \FeCA to representation learning and present \DeepFeCA, which combines \DeepCluster and \FeCA for unsupervised feature learning in the federated setting.

\end{abstract}


\section{Introduction}

Federated learning (FL) has emerged as a promising framework, enabling model training across decentralized data. This approach addresses data privacy concerns by allowing data to remain on individual clients. The goal of FL is to collaboratively train a model across multiple clients without directly sharing data.
 Within this context, FedAvg~\cite{mcmahan2017communication} has been considered the standard approach in FL, designed to obtain a centralized model by averaging the models trained independently on each client's data.

 Although FL has seen widespread applications in the domain of supervised learning, particularly in tasks like classification \citep{oh2021fedbabu, jimenez2023memory}, its utilization in the unsupervised learning sphere is still largely unexplored, even though it holds significant potential and applicability in numerous practical situations. A notable example is the large collections of unlabeled photographs owned by most smartphone users. In such instances, federated unsupervised learning can be a powerful paradigm, enabling the use of unsupervised learning approaches to leverage the ``collective wisdom'' of these unlabeled data while safeguarding user privacy.

 In this paper, we investigate federated unsupervised learning, particularly focusing on the popular clustering problem of $k$-means. In prior studies, clustering methods have been applied in FL mainly focusing on problems such as client selection \citep{ghosh2020efficient, long2023multi} and privacy enhancement \citep{li2022secure, elhussein2023privacy}, without a deep investigation into the unsupervised learning aspect. Moreover, existing distributed clustering methods overlook the unique challenges in FL, such as data heterogeneity and communication efficiency, making it difficult to apply in the federated setting. Our study extends to federated clustering, incorporating unsupervised clustering on individual clients within a federated framework.

 One key challenge of federated clustering is the inherent non-convexity of clustering problems, presenting multiple equivalent global solutions and potentially even more local solutions.
 Standard algorithms like Lloyd's algorithm~\cite{lloyd1982least} can only find a local solution of the $k$-means problem, without guaranteeing global optimum. \emph{We note that the term ``local solution'' in this context refers to a local optimal in optimization, not the solution learned from a client}\footnote{For clarity, throughout this paper, we use the client's solution for the result obtained from a client. If the solution happens to be a local solution, we name it the client's local solution.}. 
 This challenge is amplified in the federated setting, where each client's data is a distinct subset of the entire dataset.
 Even under the IID data sample scenario, each client's clustering results might be suboptimal local solutions containing spurious centroids far from the true global centroids. And this issue could become even more pronounced under non-IID scenarios.

 To this end, we propose a one-shot federated $k$-means algorithm: Federated Centroid Aggregation (\FeCA), offering a new approach by exploiting structured local solutions. In the $k$-means problem, local solutions carry valuable information from the global solution. The proposed algorithm resolves these local solutions and leverages their benign properties within the federated clustering framework. \FeCA is built upon theoretical studies \citep{qian2021structures,chen2024local} derived in a centralized setting, which suggests that every local solution is structured and contains nontrivial information about the global solution. Specifically, a local solution consists of estimates of the $k$ ground truth centers, with a subset of these estimates being accurate.

 One common concern of FL lies in the potential decrease in performance compared to centralized models due to data heterogeneity across clients. However, from the perspective of local solutions, federated clustering could benefit from the decentralized framework. Each client's solution, whether a local optimum or not, carries partial information about the global solution of the entire dataset. By incorporating multiple clients' solutions, the central server could potentially recover the global optimal solution in one shot, akin to assembling a \emph{jigsaw} puzzle of clients' solutions. For instance, if a true centroid is missing from one client's solution, it might be identified in the solutions of other clients.

Therefore, \FeCA is designed to recover the global solution for $k$-means clustering in a federated setting by refining and aggregating solutions from clients. First, Lloyd's algorithm for $k$-means is performed on each client's data. Then, \FeCA adaptively refines spurious centroids using their structural properties to obtain a set of refined centroids for each client.
Then refined centroids are sent to the central server, where \FeCA aggregates them to recover the global solution of the entire dataset. By exploiting the structure in local solutions, \FeCA is able to accurately identify the true $k$ centroids of the entire dataset in one shot.

We further extend \FeCA beyond a pre-defined feature space to the modern deep feature framework \citep{liu2021self}. Specifically, we present \DeepFeCA, a federated representation learning algorithm from decentralized unlabeled data. Concretely, we pair \FeCA with clustering-based deep representation learning models such as \DeepCluster~\cite{caron2018deep,caron2020unsupervised}, which assign pseudo-labels according to $k$-means clustering and then train the neural network in a supervised manner.
The resulting algorithm, \DeepFeCA, alternates between applying \FeCA to the current features and using \DeepCluster for further training. This iterative process enhances the model's ability to learn meaningful representations from the decentralized data.

We evaluate both \FeCA and \DeepFeCA on benchmark datasets, including S-sets~\cite{ClusteringDatasets}, CIFAR~\cite{krizhevsky2009learning}, and Tiny-ImageNet~\cite{le2015tiny}. \FeCA consistently outperforms baselines in various federated settings, demonstrating its effectiveness in recovering the global solution. Furthermore, \DeepFeCA shows promising performance in federated representation learning.

\section{Related Work}

\textbf{Federated learning.}
Mainstream FL algorithms \citep{mcmahan2017communication,khaled2020tighter,haddadpour2019convergence} adopt coordinate-wise averaging of the weights from clients. However, given the limited performance of direct averaging, other approaches have been proposed: \citet{yurochkin2019bayesian,wang2020federated,zhang2023fedala, tan2023federated} identify the permutation symmetry in neural networks and then aggregate after the adaptive permutation; \citet{lin2020ensemble,he2020group,zhou2020distilled,chen2021fedbe, zeng2023fedlab} replace weight average by model ensemble and distillation. These studies enhance the performance of the synchronization scheme but overlook the impact of local solutions on clients.

\textbf{Federated Clustering.} 
Many distributed clustering methods \citep{NIPS2013_7f975a56,bachem2018scalable,kargupta2001distributed,januzaj2004dbdc,hess2022fast} have been proposed, but they overlook the heterogeneous challenge in FL. For synchronizing results returned from different clustering solutions, consensus clustering has been studied widely \citep{monti2003consensus, goder2008consensus, li2021domain}. But it works on the same dataset, unlike FL. In the context of FL, \citet{qiao2021federated,li2023differentially,xia2020distributed,lu2023federated,li2022secure} focus on communication efficiency or privacy-preserving. A recent federated clustering study~\cite{stallmann2022towards} proposes weighted averaging for Fuzzy $c$-means but requires multiple rounds. The study most relevant to ours introduces $k$-FED~\cite{dennis2021heterogeneity}, a one-shot federated clustering algorithm, under a rather strong assumption that each client only has data from a few true clusters. It is still underexplored for federated clustering and usage of local solutions.

\textbf{Federated representation learning.}
\FedRep~\cite{collins2021exploiting} studies supervised representation learning by alternating updates between classifiers and feature extractors. \citet{jeong2020federated,zhang2020improving} study federated semi-supervised learning with the server holding some labeled data and clients having unlabeled data. For federated unsupervised learning, \citet{zhuang2021collaborative} proposes self-supervised learning in non-IID settings with a divergence-aware update strategy for mitigating non-IID challenges, distinct from our clustering focus. \citet{zhang2023federated} adopts the contrastive approach for model training on clients. A recent framework~\cite{lubana2022orchestra} introduces federated unsupervised learning with constrained clustering for representation learning, while our focus lies on exploring federated clustering via local solutions.

\section{Background}
\label{ss_back}

\textbf{Clustering.}
Given a $d$-dimensional dataset $\sD=\{x_1\in\R^d, \dots, x_N\in\R^d\}$, the goal of $k$-means problem is to identify $k$ centroids $\sC=\{c_1\in\R^d,\dots,c_k\in\R^d\}$ that minimize the following objective
\begin{align}
G(\sC) \doteq \sum_{n=1}^N \min_{j\in[k]}\|x_n - c_j\|_2^2.\label{eq_Kmeans}
\end{align}

\textbf{Federated clustering.} 
In the federated setting, the dataset $\sD$ is decentralized across $M$ clients. Each client $m\in[M]$ possesses a distinct subset $\sD_m$ of the entire dataset $\sD$. Despite different data configurations, the goal of federated clustering remains the same -- to identify $k$ centroids $\sC=\{c_1,\dots,c_k\}$ for $\sD = \cup_m \sD_m$. Under this federated framework, the optimization problem in \autoref{eq_Kmeans} can be reformulated as
\begin{align}
& \min_{\sC}~G(\sC) = \sum_{m=1}^M  G_m(\sC),
\label{eq:obj}
\end{align}
where $G_m$ is the $k$-means objective computed on $\sD_m$. 
Due to privacy concerns that restrict direct data sharing among clients, the optimization problem described in \autoref{eq:obj} cannot be solved directly. Thus, the proposed algorithm \FeCA utilizes a collaborative approach between clients and a central server.
Initially, each client $m$ independently minimizes $G_m(\sC)$ to obtain a set of $k$ centroids $\sC^{(m)}=\{c_1^{(m)},\dots,c_k^{(m)}\}$ from their dataset $\sD_m$. Then the server aggregates centroids $\cup_m \sC^{(m)}$ to find a set of $k$ centroids $\sC$ for $\sD$.

We note that when clients perform standard Lloyd's algorithm for $k$-means clustering, they usually end up with local solutions $\sC^{(m)}$, resulting in suboptimal performance even with IID distributed data $\sD_m$. These local solutions can significantly complicate the aggregation process on the central server. Thus, the key challenge in federated clustering lies in effectively resolving the client's local solutions and appropriately aggregating them on the central server.

\subsection{Structure of Local Solutions}
\label{sec:alg_local}

To better resolve the federated clustering problem, we propose to take a deeper look at local solutions in $k$-means, which often significantly differ from the global minimizer.
Recent theoretical works by \citet{qian2021structures,chen2024local} have established a positive result that under certain separation conditions, all the local solutions share a common geometric structure.
More formally, suppose a local solution identifies centroids $\{c_1,\dots,c_k\}$. Then, there exists a one-to-one association between these centroids and the true centers $\{c_1^*,\dots,c_k^*\}$ from the global solution. This association ensures that each centroid $c_i$ belongs to exactly one of the following cases with overwhelming probability\footnote{Such structure of local solutions holds even when $k\neq k^*$, where $k^*$ is the number of true clusters in the dataset.}:
\begin{itemize}[nosep,topsep=0pt,parsep=1pt,partopsep=0pt]
\item {\textbf{Case 1}} (one-fit-many association): centroid $c_i$ is associated with $s$ ($s>1$) true centers $\{c_{j_1}^*,\dots,c_{j_s}^*\}$.
\item {\textbf{Case 2}} (one/many-fit-one association): $t$ ($t\geq 1$) centroids $\{c_{i_1},\dots,c_{i_t}\}$ are all associated with one true center $c_j^*$.
\end{itemize}

Namely, a centroid $c_i$ in a local solution is either a \emph{one-fit-many} centroid that is located in the middle of multiple true centers (case 1, when $s>1$), or a \emph{one/many-fit-one} centroid that is close to a true center (case 2). Notably, when $c_i$ is the only centroid near a true center (case 2, when $t=1$), it is considered a correctly identified centroid that closely approximates a true center. An illustration is provided in \autoref{fig:alg_demo}. Next, we will introduce how our algorithm utilizes such local solution structures to obtain unified clustering results in the federated setting.

\begin{figure}[t]
\centering
\includegraphics[width=.8\columnwidth]{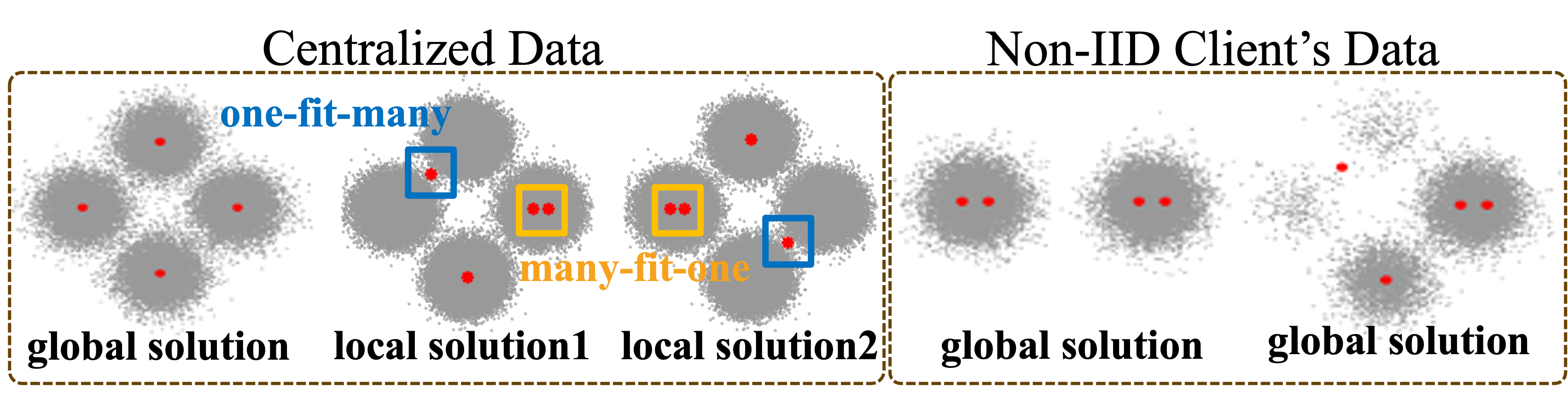}
\vspace{-.1in}
\caption{\textbf{Clustering results.} (Left): global and local solutions on centralized/IID client's data; (Right): global solutions for non-IID client's data sharing similar structures.}
\label{fig:alg_demo}
\end{figure}

\section{Jigsaw Game -- FeCA}

\begin{algorithm}[htb]
\caption{Federated Centroid Aggregation (\FeCA)}
\label{alg:FeCA}
\begin{algorithmic}[1]
\STATE \textbf{input} cluster number $k$
\FOR{each client $m=1,\dots, M$}
\STATE $\mathcal{C}^{(m)}, \mathcal{D}^{(m)} \gets$ ClientUpdate ($k$)
\STATE $\mathcal{R}^{(m)}\gets$ RadiusAssign ($\mathcal{C}^{(m)}, \mathcal{D}^{(m)}$)
\ENDFOR
\STATE $\mathcal{C} \gets \bigcup^M_{m=1} \mathcal{C}^{(m)}$
\STATE $\mathcal{R} \gets \bigcup^M_{m=1} \mathcal{R}^{(m)}$
\STATE $\mathcal{C}^*\gets$ ServerAggregation $(k, \mathcal{C}, \mathcal{R})$
\STATE \textbf{return} $\mathcal{C}^*$
\end{algorithmic}
\end{algorithm}

The proposed federated clustering algorithm \FeCA is built upon the collaboration between clients and a central server. Each client $m$ shares its refined centroid solution $\sC^{(m)}$ with the server, where each $\sC^{(m)}$ carries partial information of the global solution, similar to pieces of a jigsaw puzzle. The server then aggregates received centroids $\cup_m\sC^{(m)}$ to obtain a unified complete solution $\sC^*$, akin to assembling puzzle pieces in a \emph{jigsaw game}.
\FeCA only requires one communication between clients and the server, thanks to its adaptive refinement of local solutions on the client side. The detailed procedure is presented in Algorithm \ref{alg:FeCA} and illustrated in \autoref{fig:alg_flow}.

\paragraph{Privacy concern.} While privacy is crucial in FL, it is not our main focus. However, an advantage of our one-shot algorithm is its minimal information exchange compared to standard iterative approaches like distributed clustering. In \FeCA, sending refined centroids to the server is viewed as no more privacy risk than mainstream FL approaches of sending models with classifiers, which more or less convey class or cluster information.

\begin{figure}[t]
\centering
\includegraphics[width=.8\columnwidth]{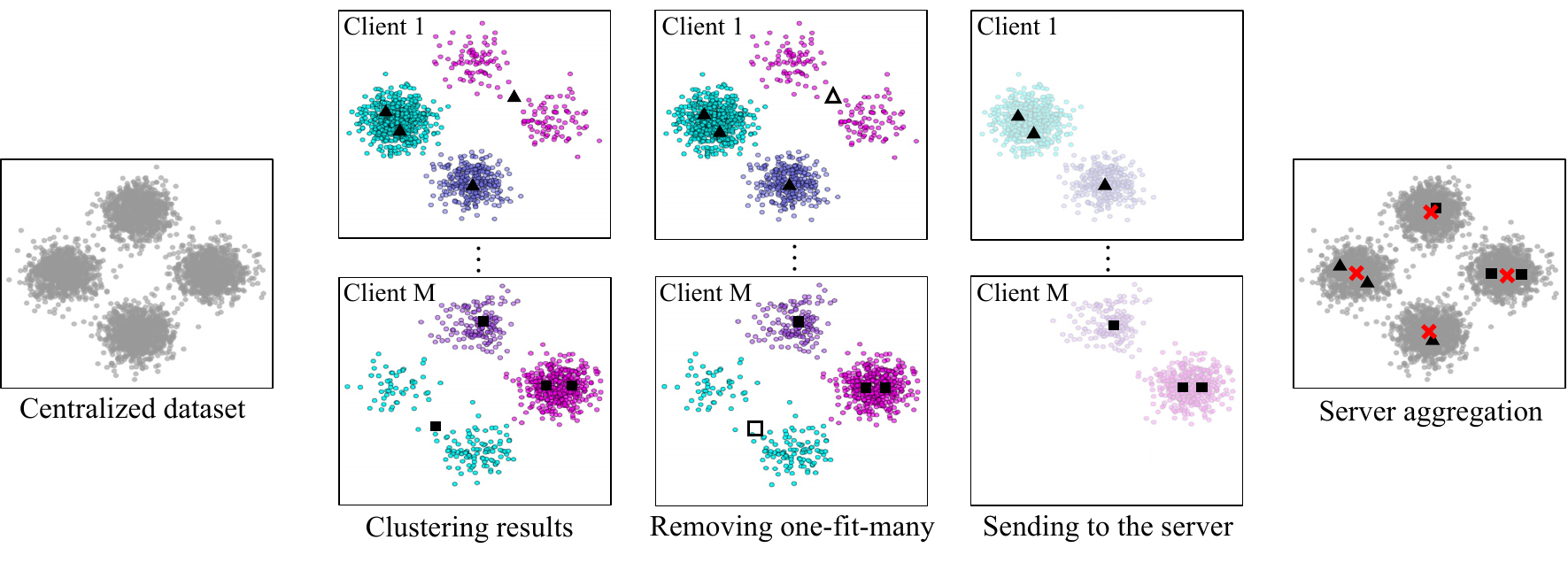}
\vspace{-.1in}
\caption{\textbf{\FeCA roadmap.} 1st column: The centralized dataset distributed to clients. 2nd column: The $k$-means clustering results on different clients under non-IID data sample scenario, where black triangles and squares represent centroids. 3rd column: Eliminating one-fit-many centroids in Algorithm \ref{alg:FeCA-ClientUpdate}, indicated by hollow squares and triangles. 4th column: Centroids sent to the server. 5th column: Aggregation of received centroids on the server where red crosses represent recovered centroids.}
\label{fig:alg_flow}
\end{figure}

\subsection{Client Update Algorithm}

\begin{algorithm}[htb]
\caption{\FeCA - ClientUpdate}
\label{alg:FeCA-ClientUpdate}
\begin{algorithmic}[1]
\STATE \textbf{input} cluster number $k$
\STATE Apply Lloyd's algorithm on the data of client $m$ to obtain $k$ clusters $\mathcal{D}^{(m)} =\{ \mathcal{D}_1^{(m)},\dots,\mathcal{D}_k^{(m)}\}$ and corresponding $k$ centroids $\mathcal{C}^{(m)}=\{c_1^{(m)},\dots,c_k^{(m)}\}$.
\WHILE{$\mathcal{C}^{(m)}\neq \oslash$}
\STATE Detect one-fit-many centroid $c_i^{(m)}$ whose cluster $\mathcal{D}_i^{(m)}$ has the largest standard deviation and calculate the objective value $G_i^{(m)}$ of the cluster $\mathcal{D}_i^{(m)}$.
\STATE Detect many-fit-one centroids $c_p^{(m)}$, $c_q^{(m)}$ of clusters $\mathcal{D}_p^{(m)}$ and $\mathcal{D}_q^{(m)}$ with the smallest pairwise distance.
\STATE Merge $\mathcal{D}_p^{(m)}$ and $\mathcal{D}_q^{(m)}$ into one cluster $\mathcal{D}_j^{(m)}$ to obtain a new centroid $c_j^{(m)}$ and calculate the objective value $G_j^{(m)}$ of the merged cluster $\mathcal{D}_j^{(m)}$.
\IF{$G_i^{(m)} \geq G_j^{(m)}$}
\STATE $\mathcal{C}^{(m)}\gets\mathcal{C}^{(m)}\setminus \{c_i^{(m)}\}$
\STATE $\mathcal{D}^{(m)}\gets\mathcal{D}^{(m)}\setminus \{\mathcal{D}_i^{(m)}\}$
\ELSE
\STATE Terminate the loop.
\ENDIF
\ENDWHILE
\STATE \textbf{return} centroids $\mathcal{C}^{(m)}$, clusters $\mathcal{D}^{(m)}$
\end{algorithmic}
\end{algorithm}

This step aims to refine the spurious local solutions of $k$-means clustering on clients. Each client $m$ first performs standard Lloyd's algorithm to obtain a set of $k$ centroids $\sC^{(m)}=\{c_1^{(m)},\dots,c_k^{(m)}\}$, and this solution is only guaranteed to be a local solution. As discussed in \autoref{sec:alg_local}, despite the variations in solutions across different clients, each $\sC^{(m)}$ always possesses some centroids (one/many-fit-one) that are proximate to a subset of ground truth centers. To facilitate the aggregation process on the server, we propose retaining only centroids from $\sC^{(m)}$ that are positioned close to true centers.

The client update step in \FeCA focuses on refining the solution $\sC^{(m)}$ by eliminating one-fit-many centroids that are distant from any true center. Specifically, a one-fit-many centroid is always located in the middle of multiple nearby true clusters, making it distant from most data points in those clusters and leading to a high standard deviation for its cluster. Conversely, many-fit-one centroids, which fit the same true center, are close to each other and thus have a small pairwise distance. As presented in Algorithm \ref{alg:FeCA-ClientUpdate}, we first use these properties to detect the candidate one-fit-many $c_i^{(m)}$ and many-fit-one $c_p^{(m)}, c_q^{(m)}$ centroids, which are likely from a spurious local solution.

Next, for further refinement, we need to confirm if these candidate centroids indeed originate from a local solution. To this end, for detected one-fit-many centroid $c_i^{(m)}$, we first calculate the objective value $G_i^{(m)}$ of its cluster $\mathcal{D}_i^{(m)}$ as
\DIFaddbegin \vspace{-.1in}
\DIFaddend \begin{align}
\label{eqa:Gi}
G_i^{(m)}=\sum\nolimits_{x\in\sD_i^{(m)}}\|x-c_i^{(m)}\|_2^2.
\end{align}
For detected many-fit-one centroids, we merge their clusters $\sD_p^{(m)}, \sD_q^{(m)}$ to form a new cluster $\sD_j^{(m)}$ with the corresponding centroid $c_j^{(m)}$. And then we calculate the objective value $G_j^{(m)}$ as
\begin{align}
\label{eqa:Gj}
G_j^{(m)}=\sum\nolimits_{x\in\{\sD_{p}^{(m)}\cup\sD_{q}^{(m)}\}}\|x-c_j^{(m)}\|_2^2.
\end{align}
If the current solution $\sC^{(m)}$ is only locally optimal, $\sD_i^{(m)}$ should contain data from multiple true clusters with a large $G_i^{(m)}$, while $\sD_j^{(m)}$ only contains data from one true cluster with a small $G_j^{(m)}$. Therefore, if $G_i^{(m)}$ is greater than $G_j^{(m)}$, it confirms that these candidate centroids stem from a local solution. In such cases, Algorithm \ref{alg:FeCA-ClientUpdate} removes $c_i^{(m)}$ from $\sC^{(m)}$ for not being close to any true center. Otherwise, if $G_i^{(m)}$ is less than $G_j^{(m)}$, these centroids are regarded as the correct portion with no need for further refinement. 

Notably, there may be multiple groups of centroids that possess the local structure. The algorithm is designed to iteratively identify and refine the local solution. Our theoretical analysis (Lemma \ref{lemma:0} in the appendix) demonstrates that Algorithm \ref{alg:FeCA-ClientUpdate} effectively removes all one-fit-many centroids from local solutions under the Stochastic Ball Model.
In this model, we assume that each client's data is sampled independently and uniformly from one of $k$ disjoint balls centered at the ground truth centers. A formal definition is provided in \autoref{sec:app_proof}.

\subsection{Radius Assign Algorithm}
\label{sec:radius_assign_alg}

\begin{algorithm}[!htb]
\caption{\FeCA - RadiusAssign (\emph{Theoretical})}
\label{alg:FeCA-RadiusAssign2}
\begin{algorithmic}[1]
\STATE \textbf{input} centroids $\mathcal{C}^{(m)}$, clusters $\mathcal{D}^{(m)}$
\STATE Initialize a new set of centroids $\Tilde{\sC}^{(m)}=\sC^{(m)}$.
\STATE Determine $r'_i$ for each centroid $\Tilde{c}_i^{(m)}\in\Tilde{\mathcal{C}}^{(m)}$ as the maximum distance from any data point in $\mathcal{D}_i^{(m)}$ to $\Tilde{c}_i^{(m)}$: $r'_i=\max_{x_i^{(m)}\in\mathcal{D}_i^{(m)}}\| x_i^{(m)}-\Tilde{c}_i^{(m)} \|_2$.
\FOR{each centroid $\Tilde{c}_i^{(m)}\in \Tilde{\mathcal{C}}^{(m)}$}
\STATE Identify the nearest centroid $\Tilde{c}_{j\neq i}^{(m)}\in\Tilde{\mathcal{C}}^{(m)}$ to $\Tilde{c}_i^{(m)}$ with the pairwise distance denoted as $d_{ij}$.
\IF{$r'_i + r'_j > d_{ij}$}
\STATE $\Tilde{\mathcal{C}}^{(m)} \gets \Tilde{\mathcal{C}}^{(m)}\setminus \{\Tilde{c}_i^{(m)}, \Tilde{c}_j^{(m)}\}$
\ENDIF
\ENDFOR
\STATE Calculate the minimal pairwise distance between centroids in $\Tilde{\mathcal{C}}^{(m)}$:
$\Tilde{\Delta}_{min}^{(m)}=\min_{\Tilde{c}_i, \Tilde{c}_j\in \Tilde{\mathcal{C}}^{(m)}, i\neq j}\norm{\Tilde{c}_i- \Tilde{c}_j}2$.
\STATE Determine the uniform radius $r^{(m)} = \frac{1}{2}\Tilde{\Delta}_{min}^{(m)}$ and assign it to each centroid $c_i^{(m)}\in\sC^{(m)}$.
\STATE $\mathcal{R}^{(m)} \gets \bigcup_i\paren{c_i^{(m)},r^{(m)}}$
\STATE \textbf{return} $\mathcal{R}^{(m)}$
\end{algorithmic}
\end{algorithm}

After removing one-fit-many centroids in the ClientUpdate phase, only centroids near true centers (one/many-fit-one) would be sent to the server. This RadiusAssign step prepares these centroids for server-side aggregation by assigning a specific radius to each. This setup allows the server to utilize these radii for effective aggregation. The primary goal of this step is to determine the radius that best approximates the true cluster radius of the entire dataset. In this section, we present two algorithmic variants for the RadiusAssign step. The first variant Algorithm \ref{alg:FeCA-RadiusAssign2} is designed for \emph{theoretical} validation purposes, while the second variant Algorithm \ref{alg:FeCA-RadiusAssign2-empirical} is tailored for \emph{empirical} experimentation.

\begin{algorithm}[h]
\caption{\FeCA - RadiusAssign (\emph{Empirical})}
\label{alg:FeCA-RadiusAssign2-empirical}
\begin{algorithmic}[1]
\STATE \textbf{input} centroids $\mathcal{C}^{(m)}$, clusters $\mathcal{D}^{(m)}$
\STATE Determine $r'_i$ for each centroid $c_i^{(m)}\in\mathcal{C}^{(m)}$ as the maximum distance from any data point in $\mathcal{D}_i^{(m)}$ to $c_i^{(m)}$: $r'_i=\max_{x_i^{(m)}\in\mathcal{D}_i^{(m)}}\| x_i^{(m)}-c_i^{(m)} \|_2$.
\FOR{each centroid $c_i^{(m)}\in\mathcal{C}^{(m)}$}
\STATE Identify the nearest centroid $c_{j\neq i}^{(m)}\in\mathcal{C}^{(m)}$ to $c_i^{(m)}$ with the pairwise distance denoted as $d_{ij}$, and then calculate $r''_i=\frac{1}{2}d_{ij}.$
\STATE Determine the unique radius $r_i^{(m)}=\min (r'_i, r''_i)$ and assign it to the centroid $c_i^{(m)}$.
\ENDFOR
\STATE $\mathcal{R}^{(m)}\gets \bigcup_i \paren{c_i^{(m)}, r_i^{(m)}}$
\STATE \textbf{return} $\mathcal{R}^{(m)}$
\end{algorithmic}
\end{algorithm}

Through the theoretical variant, we establish Theorem \ref{thm:main} that characterizes the performance of our algorithm \FeCA under the Stochastic Ball Model. This theoretical variant generates a tentative solution $\Tilde{\sC}^{(m)}$ by discarding any potential many-fit-one centroids within $\sC^{(m)}$. Following this, a radius $r^{(m)}$ is then calculated according to the minimum pairwise distance among centroids in $\Tilde{\sC}^{(m)}$, and this radius is assigned to every centroid in the original solution $\sC^{(m)}$ from Algorithm \ref{alg:FeCA-ClientUpdate}. We note that the method for identifying many-fit-one centroids utilized in this theoretical variant is only applicable under the Stochastic Ball Model.

However, in real-world applications, especially under non-IID data sample scenarios, it is both challenging and unnecessary to eliminate all many-fit-one centroids from clients' solutions, as they often align closely with true centers. Accordingly, we develop an empirical variant, Algorithm \ref{alg:FeCA-RadiusAssign2-empirical}, which assumes only one-fit-many centroids are excluded and assigns a unique radius $r_i^{(m)}$ to each centroid $c_i^{(m)}\in\sC^{(m)}$. As for remaining many-fit-one centroids, their radii are estimated as half of their pairwise distances, which are typically much smaller compared to those of correct centroids. The server then groups all received centroids based on these radii, prioritizing the largest ones first. This ensures that the smaller radii associated with many-fit-one centroids minimally impact the aggregation process. An in-depth analysis of Algorithm \ref{alg:FeCA-RadiusAssign2-empirical} is provided in \autoref{sec:appendix_radius}, showcasing its effectiveness across a variety of experimental settings, including those with high data heterogeneity.

It is worth noting that the theoretical variant is designed for theoretical analysis under the Stochastic Ball Model assumption. This assumption enables easy identification of many-fit-one centroids for clearer cluster separation approximation and accurate radius assignment. In contrast, the empirical variant does not need to remove many-fit-one centroids, as they are close to true centers and aid in reconstructing the global solution on the server side. This approach allows the empirical variant to assign distinct radii to each remaining centroid, enhancing the algorithm's effectiveness and practicality without relying on limited assumptions. A detailed comparison of the theoretical and empirical variants is provided in \autoref{sec:appendix_theo_vs_empr}.

\subsection{Server Aggregation Algorithm}

\begin{algorithm}[htb]
\caption{\FeCA - ServerAggregation}
\label{alg:FeCA-ServerAggregation}
\begin{algorithmic}[1]
\STATE \textbf{input} cluster number $k$, centroids $\mathcal{C}$, radius $\mathcal{R}$
\STATE $n=1$
\WHILE{$\mathcal{C}\neq \oslash$}
\STATE Pick $\paren{c_i, r_i}\in\mathcal{R}$ with the largest $r_i$.
\STATE Let $\mathcal{S}_n=\{c_t: c_t\in \mathcal{C}, \|c_t-c_i \|_2 \leq r_i\}$
\STATE Set $\mathcal{C}\leftarrow \mathcal{C} \setminus \mathcal{S}_n$
\STATE $n = n+1$
\ENDWHILE
\STATE Select top $k$ sets $\mathcal{S}_{n}$ containing the largest number of elements.
\STATE $\mathcal{C}^* \gets  \text{mean}(\mathcal{S}_j), j\in [k]$
\STATE \textbf{return} $\mathcal{C}^*$
\end{algorithmic}
\end{algorithm}

At the server stage, the goal is to aggregate all received centroids $\sC=\{\sC^{(1)},\dots,\sC^{(M)}\}$ from $M$ clients into a unified set of $k$ centroids $\sC^*$. This task presents apparent challenges: due to the preceding refinement stage, clients may contribute varying numbers of centroids, and the indices of these centroids often lack consistency across clients. However, assuming the refinement phase in Algorithm \ref{alg:FeCA-ClientUpdate} effectively removes spurious one-fit-many centroids far from true centers, the returned centroids on the server would be closely grouped around true centers. This phenomenon enables a straightforward classification of all returned centroids into $k$ distinct groups, each aligned with one of the $k$ true centers, as presented in Algorithm \ref{alg:FeCA-ServerAggregation}. Equivalently, this is another clustering problem based on returned centroids under a high Signal-to-Noise Ratio (SNR) separation condition. Finally, the server calculates the means of centroids within each group to obtain $\sC^*$.

In some extreme cases where the number of groups $n$ might be less than $k$, such as when all clients converge to the same local solution. In such cases, Algorithm \ref{alg:FeCA-ClientUpdate} removes one-fit-many centroids associated with the same true clusters from all clients. This renders it impossible for Algorithm \ref{alg:FeCA-ServerAggregation} to reconstruct corresponding true centers without receiving any associated centroids from clients. It is important to note that this scenario is trivial within the federated framework, where all clients share the same local solutions. Essentially, it is akin to having only one client encountering a local solution. Further discussion on cases when $n<k$ is provided in the \autoref{sec:vary_k}.

\subsection{Theoretical Analysis}

We now state our main theorem, which characterizes the performance of \FeCA under the Stochastic Ball Model. 
Assume the data $x^{(m)}$ of client $m$ is sampled independently and uniformly from one of $k$ disjoint balls $\mathbb{B}_s$ with radius $r$, each centered at a true center $\theta_s^*$, $s\in [k]$. Each ball component under the Stochastic Ball Model has a density
\begin{align}
    f_s(x) = \frac{1}{\text{Vol}(\mathbb{B}_s)} \mathbbm{1}_{\mathbb{B}_s}(x).
    \label{eqa:SBM}
\end{align}
Additionally, we define the maximum and minimum pairwise separations between the true centers $\{\theta_s^*\}_{s\in [k]}$ as
\begin{align*}
    \Delta_{max}:=\max_{s\neq s'} \| \theta_s^* - \theta_{s'}^* \|_2, \quad
    \Delta_{min}:=\min_{s\neq s'} \| \theta_s^* - \theta_{s'}^* \|_2.
\end{align*}
\begin{theorem} (Main Theorem)
    Under the Stochastic Ball Model, for some constants $\lambda\geq 3$ and $\eta \geq 5$, if 
    \begin{align*}
    \Delta_{max}\geq 4\lambda^2k^4r \quad \text{and} \quad \Delta_{min}\geq 10\eta \lambda k^2\sqrt{r\Delta_{max}},
    \end{align*} 
    then by utilizing the radius determined by Algorithm \ref{alg:FeCA-RadiusAssign2}, any output centroid $c_s^*$ from Algorithm \ref{alg:FeCA} is close to some ground truth center:
    \begin{align}
    \| c_s^* - \theta_{s'}^* \|_2 \leq \frac{4}{5\eta}\Delta_{min}.
    \end{align}
    \label{thm:main}
\end{theorem}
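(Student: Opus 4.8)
The plan is to follow a surviving centroid through the three stages of \FeCA (Algorithm~\ref{alg:FeCA}) and to show that the greedy radius grouping on the server recovers exactly one cluster per true center, each tightly concentrated around it.

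\emph{Step 1 (client-side reduction).} First I would apply Lemma~\ref{lemma:0}: under the Stochastic Ball Model, ClientUpdate (Algorithm~\ref{alg:FeCA-ClientUpdate}) deletes every one-fit-many centroid, so each surviving $c_i^{(m)}\in\sC^{(m)}$ is one/many-fit-one. Using the structural results of \citet{qian2021structures,chen2024local} in the separated regime, I would promote this to a quantitative bound: each survivor lies within some radius $\rho=O(\lambda k^2\sqrt{r\Delta_{max}})$ of a unique center $\theta_s^*$, and---since deleting a one-fit-many centroid also discards its contaminated cluster---each retained cluster $\sD_i^{(m)}$ is a clean subset of a single ball $\mathbb{B}_s$. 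Cleanliness yields $r'_i\le 2r$, the control used below; the $\sqrt{r\Delta_{max}}$ scaling of $\rho$ is exactly what the hypothesis $\Delta_{min}\ge 10\eta\lambda k^2\sqrt{r\Delta_{max}}$ is tuned to dominate.

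\emph{Step 2 (RadiusAssign).} Next I would check that the theoretical RadiusAssign (Algorithm~\ref{alg:FeCA-RadiusAssign2}) keeps one representative per represented center. The test $r'_i+r'_j>d_{ij}$ fires exactly on many-fit-one duplicates: two centroids on the same center have $d_{ij}\le 2\rho$, yet their clusters partition one ball along a Voronoi cut, forcing $r'_i,r'_j>d_{ij}/2$ and hence $r'_i+r'_j>d_{ij}$; two centroids on distinct centers have $d_{ij}\ge\Delta_{min}-2\rho$ while $r'_i+r'_j\le 4r\ll\Delta_{min}$, so the test is silent. Thus $\tilde{\sC}^{(m)}$ carries one centroid per center represented on client $m$, $\tilde{\Delta}_{min}^{(m)}$ equals the least pairwise distance over that center set up to $\pm 2\rho$, and the uniform radius is $r^{(m)}=\tfrac12\tilde{\Delta}_{min}^{(m)}$.

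\emph{Step 3 (server grouping and conclusion).} Finally I would analyze ServerAggregation (Algorithm~\ref{alg:FeCA-ServerAggregation}). Same-center centroids are always grouped, since their gap $\le 2\rho$ is below any assigned radius (all radii exceed $\tfrac12(\Delta_{min}-2\rho)>2\rho$). The content is the converse: when a group is seeded by a centroid near $\theta_s^*$ with radius $r_i$, no centroid near a different center $\theta_{s'}^*$ may enter, i.e. $\|\theta_s^*-\theta_{s'}^*\|-2\rho>r_i$. If $\theta_{s'}^*$ is also represented on the seed's client, this is immediate: then $\|\theta_s^*-\theta_{s'}^*\|\ge\tilde{\Delta}_{min}^{(m)}-2\rho=2r_i-2\rho$, which exceeds $r_i+2\rho$ because $r_i>4\rho$. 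Granting correct grouping, the run produces exactly $k$ groups, each mean lands within $\rho$ of its center, and the stated bound follows once I verify $\rho\le\tfrac{4}{5\eta}\Delta_{min}$, which is precisely the $\tfrac{8}{10}$-calibrated consequence of the two separation hypotheses.

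\emph{Main obstacle.} The hard step is the remaining sub-case of the converse, in which the seed's client carries a large radius $r^{(m)}$ because a one-fit-many deletion left it missing its mutually close centers, while the endangered $\theta_{s'}^*$ is represented only on other clients. Here $r_i$ can be as large as $\tfrac12\Delta_{max}\gg\Delta_{min}$, so the naive bound fails. Resolving it needs the structural fact that the centers absorbed by a single one-fit-many centroid form a group that is internally clustered yet externally separated from all other centers by the full SBM margin; feeding this separation together with $\Delta_{max}\ge 4\lambda^2k^4r$ into the grouping inequality is where the polynomial-in-$k$ factors in the hypotheses get consumed, and is the part I expect to require the most care.
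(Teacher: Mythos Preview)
Your three-step decomposition is exactly the paper's: Lemma~\ref{lemma:0} for Step~1, a lemma controlling $\tilde{\Delta}_{min}^{(m)}$ against $\Delta_{min}$ for Step~2, and the greedy-radius grouping argument for Step~3. The quantitative bound you call $\rho$ is the paper's $8\lambda k^2\sqrt{r\Delta_{max}}$ imported from \citet{qian2021structures}, and your final calibration $\rho\le\tfrac{4}{5\eta}\Delta_{min}$ is precisely how the paper closes. Your Step~2 analysis of the test $r'_i+r'_j>d_{ij}$ is more explicit than the paper's (which simply asserts that Algorithm~\ref{alg:FeCA-RadiusAssign2} discards the many-fit-one duplicates), but the conclusion is the same.

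The one substantive divergence is your ``main obstacle.'' The paper does \emph{not} resolve that case; it simply asserts the two-sided estimate
\[
\bigl|\tilde{\Delta}_{min}^{(m)}-\Delta_{min}\bigr|\;\le\;\tfrac{8}{5\eta}\,\Delta_{min}
\]
(its inequalities (\ref{lemma1:proof_eqa4})--(\ref{lemma1:proof_eqa6})) and from it derives $\Delta_{min}\ge\tfrac{5\eta}{5\eta+8}\tilde{\Delta}_{min}^{(m)}$, which is exactly the upper bound on the seed radius you are worried about. That direction of the inequality tacitly requires the pair of true centers realizing $\Delta_{min}$ to be represented in $\tilde{\sC}^{(m)}$ on \emph{every} client --- something that can fail if a one-fit-many deletion on some client absorbed both members of the closest pair. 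The paper neither flags nor handles this scenario; it treats the two-sided bound as an immediate triangle-inequality fact. So your obstacle is genuine, but the published proof does not overcome it with any additional structural argument --- it simply assumes it away under the infinite-sample SBM. If your goal is to reproduce the paper's argument, you may take the two-sided bound as given and skip the obstacle; if your goal is a fully watertight version, your instinct to exploit which centers a single one-fit-many centroid can absorb is the natural direction, though the paper offers no help there.
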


Theorem \ref{thm:main} characterizes the performance of our main algorithm \FeCA, utilizing the radius from Algorithm \ref{alg:FeCA-RadiusAssign2}.
The proof, provided in \autoref{sec:app_proof}, builds on the infinite-sample and high SNR assumptions established in \citet{qian2021structures} which characterizes local solutions of centralized $k$-means. Next, we will provide a discussion of both conditions.
\begin{itemize}
\item {\bf Separation Condition:} the separation between true centers $\Delta_{min}$ and $\Delta_{max}$ cannot be too small is generally necessary for a local solution to bear the structural properties described in \autoref{sec:alg_local}~\cite{qian2021structures}. Additionally, the ratio $\frac{\Delta_{max}}{\Delta_{min}}$ indicates how evenly spaced the true centers are, with the ratio approaching $1$ when the true centers are nearly evenly spaced.
\item {\bf Technical Assumptions:} our main theorem heavily depends on {\em the Stochastic Ball Model} and {\em infinity sample} assumptions. We would love to note that the local solution structure also holds when the data follows the Gaussian mixture model or has finite data samples ~\cite{chen2024local}. We view these technical assumptions as less important than the above separation condition and will corroborate using both synthetic and real clustering data to demonstrate the effectiveness of our algorithm.
\end{itemize}

Note that the above assumptions are often not met in practice. Thus, we develop another variant, Algorithm \ref{alg:FeCA-RadiusAssign2-empirical}, which does not require the elimination of many-fit-one centroids and assigns a unique radius to each returned centroid from the client. A detailed empirical evaluation of these radii determined by Algorithm \ref{alg:FeCA-RadiusAssign2-empirical} is presented in \autoref{sec:appendix_radius}, showcasing their effectiveness in supporting our algorithm \FeCA.


\subsection{Discussions on Heterogeneity}
We assume that the client's local solution for its dataset $\sD_m$ is also a local solution of the entire dataset $\sD=\cup_m \sD_m$, which allows us to leverage the structures discussed in \autoref{sec:alg_local}. 
This assumption holds when $\sD_m$ is an IID-sampled subset from $\sD$. Our experiments showcase our algorithm's robustness even under non-IID conditions. Here we provide an explanation in \autoref{fig:alg_demo}, where right plots illustrate two clients' non-IID sampled data and the corresponding \emph{global} solutions (achieve a global optimum when $k=k^*$\footnote{$k^*$ indicates the number of true centers in the entire dataset.}). We found that despite the data heterogeneity, the clients' global solutions share similar structures as described in \autoref{sec:alg_local}. We attribute these observations to the fact that under non-IID conditions, clients' data tend to concentrate on some of the true clusters. This increases the chance that clients' global solutions contain many-fit-one centroids for those true clusters, which can be aggregated together on the server by our algorithm \FeCA. This scenario also suggests that even if a client can recover the global solution on its non-IID data, such a global solution coincides with a local solution on IID data and we still need to deploy \FeCA to produce the final solution.

\section{\DeepFeCA}

\begin{algorithm}[htb]
\caption{\DeepFeCA for Representation Learning}
\label{alg:fed_deep_clustering}
\begin{algorithmic}[1]
\STATE \textbf{Input:} cluster number $k$, total number of clients $M$,  round number $T$
\STATE \textbf{Initialization:} server model $\bar{\vtheta}$
\FOR{round $t= 1, \cdots,T$}
\STATE {\color{red}Perform \FeCA with $M$ clients to obtain $k$ aggregated centroids $\sC^*$ ($k$-means clustering is performed on features extracted by $f_{\bar{\vtheta}}$ on each client).}
\STATE Broadcast $\bar{\vtheta}$ and $\sC^*$ to all clients.
\FOR{each client $m\in[M]$}
\STATE {\color{blue}Create $\sD_m=\{(x_n^{(m)}, \hat{y}_n^{(m)})\}_{n=1}^{N_m}$ where $\hat{y}_n^{(m)}=\argmin_{j} \|x_n^{(m)}-c_j\|_2$ and $c_j\in \sC^*$.}
\STATE {\color{blue}Train $\bar{\vtheta}$ together with a linear classifier on $\sD_m$ for multiple epochs to obtain the client model $\vtheta^{(m)}$.}
\ENDFOR
\STATE {\color{green}$\bar{\vtheta} \leftarrow \sum_{m=1}^M \frac{|\sD_m|}{|\sD|}{{\vtheta}^{(m)}}$}
\ENDFOR
\STATE Return the model $\bar{\vtheta}$
\end{algorithmic}
\end{algorithm}

With \FeCA, we can learn centroids $\sC^*$ in the pre-defined feature space collaboratively with multiple clients. In this section, we extend \FeCA to unsupervised representation learning~\cite{liu2021self}, aiming to learn a feature extractor $f_{\vtheta}$ parameterized by $\vtheta$ from the unlabeled data set $\sD$, such that the extracted feature $f_{\vtheta}(x)$ of data $x$ can better characterize its similarity or dissimilarity to other data instances.

Some studies \citep{asano2019self,caron2020unsupervised,caron2018deep} integrate clustering approaches with unsupervised representation learning. For instance, \citet{caron2018deep} proposed \DeepCluster, which learns $f_{\vtheta}(x)$ from an unlabeled dataset $\sD=\{x_n\}_{n=1}^N$ by repeating two steps:
\begin{itemize}
    \item Perform Lloyd's algorithm on $\{f_{\vtheta}(x_n)\}_{n=1}^N$ to obtain a set of $k$ centroids $\sC=\{c_j\}_{j=1}^k$;
    \item Create a pseudo-labeled set $\sD=\{(x_n, \hat{y}_n)\}_{n=1}^N$ where $\hat{y}_n=\argmin_{j} \|x_n - c_j\|_2$, and learn $f_{\vtheta}$ with a linear classifier in a supervised fashion for multiple epochs.
\end{itemize}

\DeepCluster is known for its simplicity and has been shown to perform on par with other more advanced self-supervised learning methods for representation learning~\cite{ericsson2021well}. In this paper, we extend \DeepCluster to the federated setting and propose \DeepFeCA, which integrates \DeepCluster within the \FedAvg~\cite{mcmahan2017communication} framework. Namely, \DeepFeCA iterates between local training of $f_{\vtheta}(x)$ on each client and global model aggregation for multiple rounds. At the end of each round, \DeepFeCA applies \FeCA to update the centroids that are used to assign pseudo labels. 
Algorithm \ref{alg:fed_deep_clustering} outlines our approach, where the {\color{red}red text} corresponds to \FeCA, the {\color{blue}blue text} corresponds to \DeepCluster, and the {\color{green}green text} corresponds to the element-wise weight average of \FedAvg.

\section{Experiments}

We first evaluate \FeCA on benchmark synthetic datasets, which have well-established true centers. Then we extend our evaluation to frozen features of real-world image data extracted from pre-trained neural networks.
Additionally, we assess the representation learning capabilities of \DeepFeCA by training a deep feature extractor network from scratch in the federated framework.

To simulate the non-IID data partitions, we follow \citet{hsu2019measuring} to split the data drawn from $\text{Dirichlet}(\alpha)$ for multiple clients. Smaller $\alpha$ indicates that the split is more heterogeneous. 
We also include the IID setting, in which clients are provided with uniformly split subsets of the entire dataset. Furthermore, we have standardized the number of clients to $M=10$ for all experiments in this section. A detailed discussion on the impact of varying the number of clients is provided in \autoref{app:varying_clients}.

\textbf{Baselines.}
We mainly compare three baselines:
\begin{itemize}
\item \textbf{Match Averaging (M-Avg)}: matches different sets of centroids from clients by minimizing their $\ell_2$-distances and returns the means of matched centroids.
\item \textbf{$k$-FED}: the one-shot federated clustering method~\cite{dennis2021heterogeneity} designed for heterogeneous data, assuming that each client's data originates from $k'\leq\sqrt{k^*}$ true clusters. The method utilizes a small $k'$ for $k$-means clustering on clients. In the following experiments, we select a single $k'$ for each dataset, with detailed tuning experiments provided in \autoref{app:tune_k_kFED}.
\item \textbf{FFCM}: ~\cite{stallmann2022towards} focuses on fuzzy $c$-means clustering and presents two versions of aggregation algorithms, which are weighted averaging centroids (v1) and applying $k$-means on centroids (v2). Since this method is not designed for a one-shot setting, we report its results for both round $1$ and round $10$ in the following experiments.
\end{itemize}
Additionally, we include a \textbf{centralized} benchmark, representing the performance of $k$-means clustering on the entire dataset without federated splits. In the following experiments, we set $k=k^*$ for all methods except for $k$-FED.

\textbf{Evaluation metric.} 
For synthetic datasets with known true centers, we assess recovered centroids by calculating the $\ell_2$-distance between output centroids and true centers. In contrast, for real datasets where true centers are unknown, we adopt the standard clustering measures including \emph{Purity} and \emph{Normalized Mutual Information (NMI)}. The average Purity for all clusters is reported. NMI measures the mutual information shared between clustering assignments $X$ and true labels $Y$ defined as $NMI(X,Y)=2I(X;Y)/(H(X)+H(Y))$,
where $I$ denotes the mutual information and $H$ is the entropy.

\subsection{\FeCA Evaluation}
\paragraph{On synthetic datasets.} 
We evaluate \FeCA on benchmark datasets in \cite{ClusteringDatasets} with known true centers. Specifically, we focus on S-sets, comprising synthetic data characterized by four different degrees of separation. S-sets includes four sets: S1, S2, S3, and S4, each consisting of $15$ Gaussian clusters in $\mathbb{R}^2$. Visualizations of S-sets are provided in \autoref{app:more_experiments}.

We assess recovered centroids by calculating the $\ell_2$-distance to ground truth centers, with mean results and standard deviation from $10$ runs reported in \autoref{exp:synthetic_l2}. Additionally, the Purity and NMI of clustering assignment quality are presented in \autoref{app:synthetix}. We investigate three data sample scenarios in the federated setting: IID, Dirichlet($0.3$), and Dirichlet($0.1$). And we select $k'=5$ for $k$-FED after careful tuning (detailed in \autoref{app:tune_k_kFED}). 

Results in \autoref{exp:synthetic_l2} indicate that our algorithm \FeCA consistently outperforms all baselines in recovering the global solution across all experimental settings. 
\autoref{fig:synthetic_results} provides a visualization of these results, demonstrating that even under the challenging non-IID scenario -- Dirichlet(0.3), \FeCA's recovered centroids closely approximate the true centers.

\begin{table}[!h]
\setlength{\tabcolsep}{4.7pt}
\renewcommand{\arraystretch}{1}
    \centering
    \caption{\textbf{$\ell_2$-distance (scaled by $10^4$) between recovered centroids and true centers on S-sets under three data sample scenarios.} Dirichlet($0.3$) and Dirichlet($0.1$) are denoted as ($0.3$) and ($0.1$), respectively. $Rd$ indicates rounds for FFCM. We report mean$\pm$std from $10$ random runs.\\}
    \begin{tabular}{lccccccc}
\toprule
\textbf{$\ell_2$-distance} $\times 10^4$ & \multicolumn{3}{c}{S-sets (S1)} & & \multicolumn{3}{c}{S-sets (S2)} \\
\cmidrule{2-4}\cmidrule{6-8}

Method & IID & (0.3) & (0.1) && IID & (0.3) & (0.1)\\
\midrule
M-Avg & 7.2$\pm$4.3 & 45.8$\pm$5.0 & 63.0$\pm$2.8 
 && 14.7$\pm$7.1 & 51.4$\pm$8.9 & 64.0$\pm$11.1 \\
 
FFCMv1(Rd=1) & 12.2$\pm$15.8 & 80.7$\pm$13.3 & 81.1$\pm$15.7 && 16.7$\pm$20.4 & 70.9$\pm$13.3 & 89.7$\pm$16.0 \\

FFCMv1(Rd=10) & 3.3$\pm$1.4 & 50.3$\pm$8.8 & 66.1$\pm$12.4 && 9.6$\pm$19.1 & 56.9$\pm$12.3 & 60.3$\pm$18.2 \\

FFCMv2(Rd=1) & 23.1$\pm$19.4 & 75.7$\pm$16.8 & 77.0$\pm$14.4 && 16.1$\pm$16.9 & 71.9$\pm$13.5 & 79.9$\pm$12.6 \\

FFCMv2(Rd=10) & 2.6$\pm$0.4 & 42.9$\pm$14.8 & 54.2$\pm$14.6 && 32.9$\pm$0.5 & 58.1$\pm$13.5 & 59.4$\pm$25.8 \\

$k$-FED($k'$=5) & 84.6$\pm$15.8 & 59.4$\pm$12.3 & 59.8$\pm$16.9 && 75.9$\pm$17.7 & 64.1$\pm$13.1 & 63.9$\pm$11.6 \\

\FeCA & \textbf{1.0}$\pm$0.1 & \textbf{6.8}$\pm$3.7 & \textbf{22.3}$\pm$15.6 && \textbf{1.9}$\pm$0.1 & \textbf{13.6}$\pm$16.1 & \textbf{38.8}$\pm$3.4 \\

\midrule
Centralized & \multicolumn{3}{c}{14.3$\pm$18.7} && \multicolumn{3}{c}{8.4$\pm$14.7} \\

\bottomrule
\toprule
\textbf{$\ell_2$-distance} $\times 10^4$ & 

\multicolumn{3}{c}{S-sets (S3)} & & \multicolumn{3}{c}{S-sets (S4)} \\
\cmidrule{2-4}\cmidrule{6-8}

 Method & IID & (0.3) & (0.1) && IID & (0.3) & (0.1)\\ 

\midrule
 M-Avg & 
 16.4$\pm$6.8 & 35.5$\pm$5.7 & 46.7$\pm$5.3 && 14.2$\pm$3.7 & 28.7$\pm$3.3 & 40.1$\pm$6.4\\
 
FFCMv1(Rd=1) &
16.6$\pm$15.2 & 58.1$\pm$11.9 & 77.4$\pm$8.9 && 11.4$\pm$9.5 & 45.0$\pm$8.9 & 63.1$\pm$10.8\\
 
FFCMv1(Rd=10) & 
11.4$\pm$15.5 & 45.9$\pm$8.8 & 66.6$\pm$12.9 && 4.8$\pm$1.3 & 45.0$\pm$12.3 & 52.0$\pm$11.9\\
 
FFCMv2(Rd=1) & 
18.2$\pm$12.6 & 62.7$\pm$12.6 & 79.1$\pm$10.2 && 13.0$\pm$11.4 & 48.9$\pm$9.7 & 60.4$\pm$9.6\\
 
FFCMv2(Rd=10) & 
4.5$\pm$0.7 & 49.4$\pm$19.3 & 56.7$\pm$16.7 && 4.9$\pm$0.6 & 44.6$\pm$9.1 & 49.9$\pm$7.3\\
 
$k$-FED($k'$=5) & 
73.1$\pm$14.3 & 65.6$\pm$15.2 & 68.8$\pm$12.8 && 52.7$\pm$10.8 & 43.4$\pm$6.3 & 47.5$\pm$8.6\\
 
\FeCA & 
\textbf{3.6}$\pm$0.7 & \textbf{23.6}$\pm$8.8 & \textbf{33.2}$\pm$6.7 && \textbf{4.7}$\pm$0.7 & \textbf{24.5}$\pm$7.2 & \textbf{31.5}$\pm$5.8\\
\midrule
Centralized & 
\multicolumn{3}{c}{25.4$\pm$19.1} && \multicolumn{3}{c}{20.1$\pm$10.0} \\
\bottomrule
    \end{tabular}
    \label{exp:synthetic_l2}
\end{table}

\begin{figure}[htb]
\centering
    \includegraphics[width=\columnwidth]{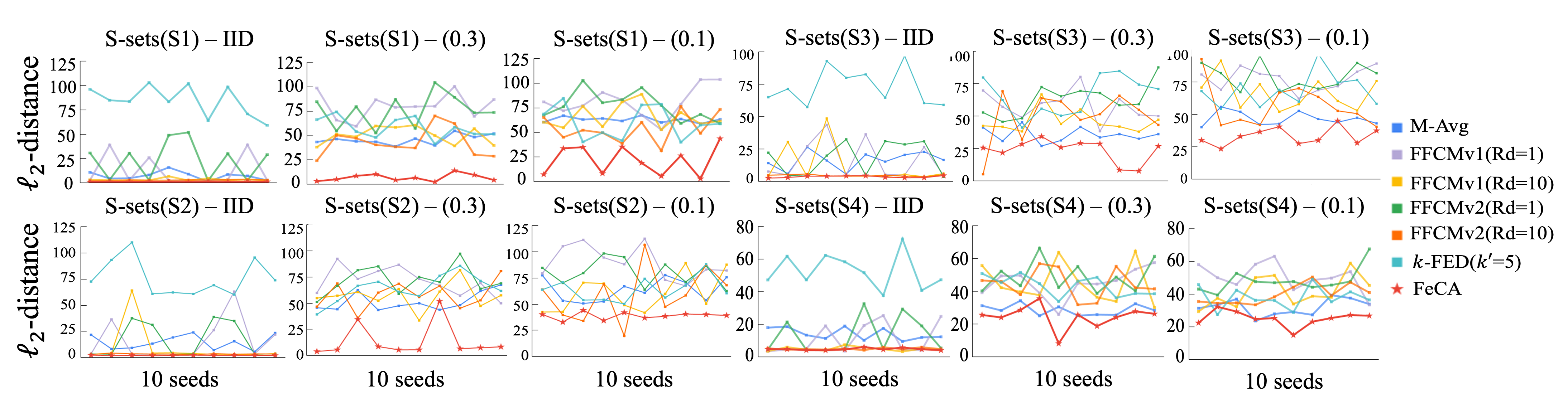}
    \vspace{-.35in}
    \caption{\textbf{Illustrations of $\ell_2$-distance results in \autoref{exp:synthetic_l2} with 10 random seeds.}}
    \label{fig:synthetic_S1S4}
\end{figure}

\begin{figure}[htb]
    \centering
    \includegraphics[width=\columnwidth]{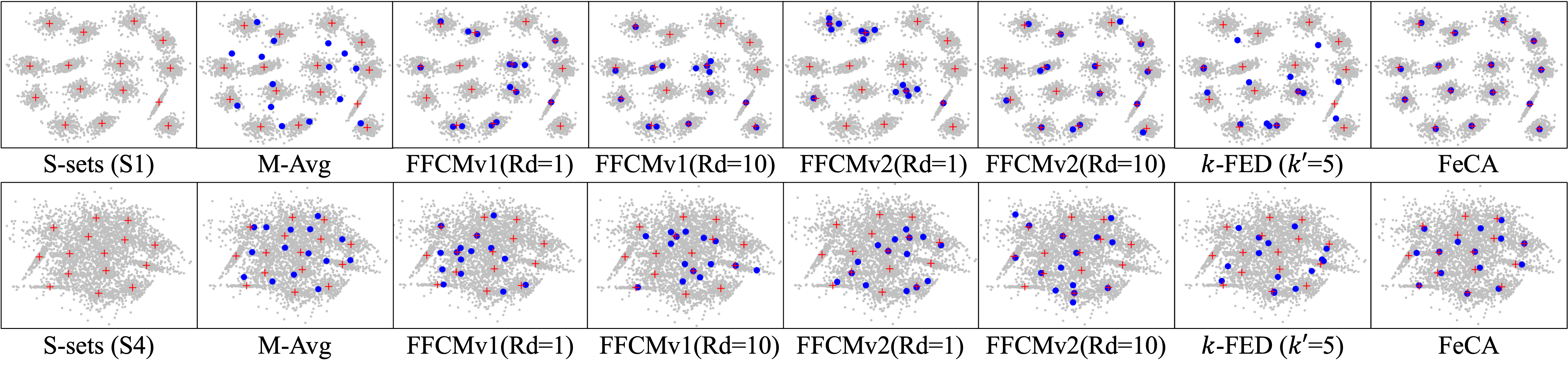}
    \vspace{-.3in}
    \caption{\textbf{Visualizations of S-sets (S1\&S4) and recovered centroids by different methods.} 
    Results are showcased under the Dirichlet($0.3$) data sample scenario. Blue dots represent recovered centroids, and red crosses indicate the ground truth centers.}
    \label{fig:synthetic_results}
\end{figure}

\begin{remark}
Note that \autoref{exp:synthetic_l2} suggests that \FeCA (and some other federated clustering methods) can even outperform the centralized $k$-means. From the perspective of federated learning, this may look odd. But it makes perfect sense from the local solution point of view of $k$-means. Solving centralized $k$-means likely leads to a local solution with suboptimal performance. However, with multiple clients independently solving $k$-means, their solutions together have a higher chance to collaboratively recover the global solution. To explore this advantage of \FeCA, we conduct experiments on the impact of varying numbers of clients on the synthetic dataset, as detailed in \autoref{app:varying_clients}.
\end{remark}

\textbf{On frozen features.}
We evaluate \FeCA on features extracted from pre-trained neural networks using real datasets -- CIFAR10/100~\cite{krizhevsky2009learning}. And the frozen features are generated from the ImageNet pre-trained ResNet-50~\cite{he2016deep} model. For $k$-FED algorithm, we select $k'=3$ for CIFAR10 and $k'=10$ for CIFAR100, following the suggestion $k'\leq\sqrt{k^*}$ from their paper. In \autoref{exp:pre-trained}, we present the average Purity and NMI calculated from three random runs. Our algorithm \FeCA demonstrates robust performance across different data sample scenarios. And it outperforms all baseline models in most cases.

Under the extreme non-IID scenario -- Dirichlet(0.1), the $k$-FED algorithm tends to outperform others. This is because $k$-FED is designed for high heterogeneity, assuming each client possesses data from $k'\leq\sqrt{k^*}$ true clusters. Without considering local solution structures in $k$-means, $k$-FED relies on an accurate selection of $k'$ to reduce the chance of occurrence of spurious centroids in scenarios of high heterogeneity, as illustrated in \autoref{fig:alg_demo}(right). This strategy makes its performance highly sensitive to the choice of $k'$, and it tends to decrease rapidly in less heterogeneous cases. This situation highlights the importance of addressing local solutions in federated clustering.

\begin{table}[!h]
    \centering
    \setlength{\tabcolsep}{1.5pt}
    \renewcommand{\arraystretch}{1.2}
    \caption{\textbf{Purity and NMI on CIFAR-10/100.} We present mean results from three random runs under four data sample scenarios. For $k$-FED method, we select $k'$=3 on CIFAR-10 and $k'$=10 on CIFAR-100.\\}
    \begin{tabular}{l|cccccccccc | lcccccccccc}
\toprule
 & \multicolumn{9}{c}{CIFAR-10}&&&& \multicolumn{7}{c}{CIFAR-100}\\

 & \multicolumn{4}{c}{Purity}& &\multicolumn{4}{c}{NMI} & & & \multicolumn{4}{c}{Purity}& &\multicolumn{4}{c}{NMI} \\

\cline{2-5}\cline{7-11}\cline{12-16}\cline{18-21}
Method  & IID & (1.0) & (0.3) & (0.1) & & IID & (1.0) & (0.3) & (0.1) && & IID & (1.0) & (0.3) & (0.1) & & IID & (1.0) & (0.3) & (0.1)\\

\cline{1-6}\cline{7-11}\cline{12-21}
M-Avg & 0.48 & 0.45 & 0.41 & 0.39 & & 0.43 & 0.41 & 0.35 & 0.34 && & 0.20 & 0.19 & 0.19 & 0.18 & & 0.37 & 0.37 & 0.36 & 0.36 \\
FFCMv1(Rd=1) & 0.24 & 0.32 & 0.37 & 0.36 & & 0.33 & 0.37 & 0.38 & 0.36 && & 0.06 & 0.07 & 0.07 & 0.07 & & 0.24 & 0.25 & 0.24 & 0.25 \\
FFCMv1(Rd=10) & 0.20 & 0.20 & 0.23 & 0.24 & & 0.31 & 0.31 & 0.32 & 0.31 && & 0.03 & 0.03 & 0.04 & 0.03 & & 0.13 & 0.13 & 0.15 & 0.14 \\
FFCMv2(Rd=1) & 0.32 & 0.44 & 0.53 & 0.56 & & 0.27 & 0.43 & 0.48 & 0.50 && & 0.10 & 0.11 & 0.11 & 0.12 & & 0.27 & 0.30 & 0.30 & 0.32 \\
FFCMv2(Rd=10) & 0.32 & 0.43 & 0.53 & 0.56 & & 0.27 & 0.43 & 0.48 & 0.50 && & 0.10 & 0.10 & 0.10 & 0.12 & & 0.26 & 0.28 & 0.29 & 0.31 \\
$k$-FED & 0.28 & 0.41 & 0.47 & 0.51 & & 0.30 & 0.39 & 0.47 & \textbf{0.52} && & 0.10 & 0.22 & 0.34 & \textbf{0.48} & & 0.27 & 0.37 & 0.49 & \textbf{0.62}\\
\FeCA & \textbf{0.61} & \textbf{0.60} & \textbf{0.58} & \textbf{0.57} & & \textbf{0.53} & \textbf{0.51} & \textbf{0.49} & 0.50 && & \textbf{0.37} & \textbf{0.37} & \textbf{0.37} & 0.34  & & \textbf{0.49} & \textbf{0.49} & \textbf{0.49} & 0.47 \\
\cline{1-11}\cline{12-21}
Centralized & \multicolumn{4}{c}{0.64} & & \multicolumn{4}{c}{0.53} && & \multicolumn{4}{c}{0.39} & & \multicolumn{4}{c}{0.50} \\
\bottomrule

    \end{tabular}
    \label{exp:pre-trained}
\end{table}

\subsection{\DeepFeCA Evaluation}
We validate \DeepFeCA on the CIFAR10/100 and the Tiny-ImageNet dataset with $64\times 64$ resolution. We randomly initialize a ResNet-$18$ model and train it for $150$ rounds, with all $10$ clients fully participating in the process. Each round we train $5$ local epochs for clients' models with batch size $128$. We modified the official implementation of \DeepCluster-v2~\cite{caron2020unsupervised} and followed their training details.

The test accuracy is reported using linear evaluation, with the mean results of three runs presented in \autoref{exp:train_from_scratch}. 
Since each round of FFCM requires one communication between the server and clients, we only perform FFCM(Rd=1) given the computation resource constraint. And we set $k'=20$ for $k$-FED on Tiny-ImageNet. We first confirm that the centralized training of \DeepCluster-v2 reaches reasonable accuracy on three datasets. Also, we see that the performance in federated framework drops sharply compared to centralized learning, showing the challenges of learning deep representation on decentralized data. One possible reason could be the limited performance of $k$-means clustering results on noisy features, especially in the early rounds, leading to unreliable pseudo labels in the following supervised training step.

However, our method shows encouraging improvements. The \DeepFeCA outperforms the baselines significantly and consistently across all settings on three datasets. From experiments, we demonstrate: (1) it is challenging for applications of \DeepCluster-v2 in federated settings to match its centralized performance; (2) it is possible for current baselines to learn meaningful features in federated representation learning; (3) the proposed \DeepFeCA serves as a strong baseline, which outperforms current algorithms by a notable gain.

\begin{table}[htb]
    \centering
    \setlength{\tabcolsep}{1.5pt}
    \renewcommand{\arraystretch}{1.2}
    \caption{\textbf{Top-1 test accuracy (\%) with linear evaluation.} We present mean results from 3 runs on three datasets.\\}
    \begin{tabular}{lcccccccccccc}
\toprule
Accuracy (\%)& \multicolumn{3}{c}{CIFAR-10}&&\multicolumn{3}{c}{CIFAR-100}&&\multicolumn{3}{c}{Tiny-ImageNet}\\
\cmidrule{2-4}\cmidrule{6-8}\cmidrule{10-12}
Method  & IID & (0.3) & (0.1) & & IID & (0.3) & (0.1) & & IID & (0.3) & (0.1)\\
\midrule
M-Avg & 53.6 & 48.2 & 45.1 & & 25.2 & 24.8 & 23.6 & & 16.4 & 15.1 & 13.5 \\
FFCMv1(Rd=1) & 44.6 & 48.5 & 45.5 & & 22.5 & 25.0 & 25.9 & & 12.1 & 16.9 & 12.5 \\
FFCMv2(Rd=1) & 47.7 & 50.1 & 49.1 & & 21.9 & 22.9 & 22.9 & & 14.9 & 14.1 & 14.0 \\
$k$-FED($k'$=20) & 46.8 & 50.0 & 47.1 & & 27.5 & 28.9 & 29.7 & & 15.3 & 18.9 & 23.4 \\
\DeepFeCA & \textbf{55.3} & \textbf{63.7} & \textbf{59.9} & & \textbf{35.3} & \textbf{34.9} & \textbf{35.2} & & \textbf{26.7} & \textbf{26.4} & \textbf{26.1} \\
\midrule
Centralized & \multicolumn{3}{c}{80.9} & & \multicolumn{3}{c}{50.1} & & \multicolumn{3}{c}{36.6}\\
\bottomrule

    \end{tabular}
    \label{exp:train_from_scratch}
\end{table}

\section{Conclusions and Future Works}

We investigate federated clustering, an important yet under-explored area in federated unsupervised learning, and propose a one-shot algorithm, \FeCA, by leveraging structures of local solutions in $k$-means. We also adopt FeCA for representation learning and propose \DeepFeCA. Through comprehensive experiments on benchmark datasets, both FeCA and DeepFeCA demonstrate superior performance and robustness, outperforming established baselines across various settings.

\paragraph{Towards other challenging settings.} Throughout the whole paper, we consider either the infinite sample scenario (for theory) or the large sample scenario (for experiments), where a considerable large data sample size is still required for the local solution to have the desired structure. This corresponds to the cross-silo federated learning as introduced in the review paper~\cite{kairouz2021advances}, where the number of clients is limited but sufficient data is available on each client. The other cross-device federated learning setting, where limited data is available on each client, can be adapted to tentatively mimic the cross-silo setting. One could group all the clients into a few groups to guarantee sufficient data samples in each group, and then apply FedAvg~\cite{mcmahan2017communication} or other federated algorithm over the clients within each group, and at last deploy our federated $k$-means algorithm on solutions returned by different groups. 

\subsubsection*{Acknowledgments}
J. Xu and Y. Zhang acknowledge support from the Department of Electrical and Computer Engineering at Rutgers University. H.-Y. Chen and W.-L. Chao are supported in part by grants from the National Science Foundation (IIS-2107077 and OAC-2112606) and Cisco Research.

\bibliography{main}
\bibliographystyle{tmlr}


\appendix

\section{Proof of Theorem \ref{thm:main}}
\label{sec:app_proof}
In this section, we prove Theorem \ref{thm:main}. Under the Stochastic Ball Model and high Signal-to-Noise Ratios (SNR) condition, we will demonstrate: (1) all the clients returned one/many-fit-one centroids corresponding to the same ground truth center are bounded within a ball of some radius (determined by Algorithm \ref{alg:FeCA-RadiusAssign2}); (2) for the final output centroids $\sC^*=\{c_1^*,\dots,c_k^*\}$ from Algorithm \ref{alg:FeCA}, the distance between any centroid $c_s^*\in\sC^*, s\in[k]$ and its corresponding true center is upper bounded. Specifically, the proof is composed of the following three steps:
\begin{itemize}
    \item \textbf{Step 1 (Proof of the effectiveness of removing one-fit-many in Algorithm \ref{alg:FeCA-ClientUpdate})}: On the client end, we prove the effectiveness of removing all one-fit-many centroids by Algorithm \ref{alg:FeCA-ClientUpdate};
    \item \textbf{Step 2 (Proof of the effectiveness of assigning radius in Algorithm \ref{alg:FeCA-RadiusAssign2})}: On the server end, we prove that the radius assigned by Algorithm \ref{alg:FeCA-RadiusAssign2} effectively encloses all the centroids (returned from the clients) associated with the same true center;
    \item \textbf{Step 3 (Proof of Theorem \ref{thm:main})}: On the server end, under the assumption that there does not exist any one-fit-many centroid (proved in Step 1), we first prove Algorithm \ref{alg:FeCA-ServerAggregation} correctly classifies all the returned centroids $\sC=\{\sC^{(1)},\dots,\sC^{(M)}\}$ from $M$ clients using radii assigned by Algorithm \ref{alg:FeCA-RadiusAssign2}, and then derive the error bound between recovered centroids and their associated ground truth centers.
\end{itemize}
For completeness, we outline some notations used in the following proof and a formal description of the Stochastic Ball Model below.

\paragraph{Stochastic Ball Model and Notations.}
Let $\theta_1^*,\dots,\theta_k^*\in\mathbb{R}^d$ represent $k$ distinct true cluster centers, and $f_s$ be the density of a distribution with mean $\theta_s^*$ for each $s\in[k]$.
We assume each data point $x^{(m)}\in\mathbb{R}^d$ of client $m\in[M]$ is sampled independently and uniformly from a mixture $f$ of distributions $\{f_s\}_{s\in[k]}$, with the density 
\begin{align}
    f(x) = \frac{1}{k} \sum_{s=1}^k f_s(x).
\end{align}
The Stochastic Ball Model is the mixture $f$ where each ball component has density
\begin{align}
    f_s(x) = \frac{1}{\text{Vol}(\mathbb{B}_s)} \mathbbm{1}_{\mathbb{B}_s}(x), \quad s\in [k],
    \label{proof:SBM}
\end{align}
where $\mathbb{B}_s$ denotes a ball component centered at $\theta_s^*$ with radius $r$. 
In the context of the $k$-means problem, to identify a set of $k$ centroids $\sC^{(m)}=\{c_1^{(m)},\dots,c_k^{(m)}\}$ on client $m$, we consider the goal as minimizing the following objective:
\begin{align}
    G(\sC^{(m)}) = N\int \min_{i\in[k]} \| x^{(m)}-c_i^{(m)} \|_2^2 f(x^{(m)})dx^{(m)} = \frac{1}{k} \sum_{s=1}^k \int \min_{i\in[k]} \| x^{(m)}-c_i^{(m)} \|_2^2f_s(x^{(m)}) dx^{(m)}.
    \label{eqa:obj_inifinite}
\end{align}
The above objective function represents the infinite-sample limit of the objective (\ref{eq_Kmeans}) on client $m$.
In the following proof, we denote the objective $G(\sC^{(m)})$ in (\ref{eqa:obj_inifinite}) on client $m$ as $G^{(m)}$.
Given a set of $k$ centroids $\sC^{(m)}$, we denote the associated Voronoi set as $\{\mathcal{V}_1^{(m)},\dots,\mathcal{V}_k^{(m)}\}$, where $\mathcal{V}_j^{(m)}$ is the region consisting of all the points closer to $c_j^{(m)}$ than any other centroid in $\sC^{(m)}$.
Formally, for each $j\in[k]$, we define
\begin{align}
    \mathcal{V}^{(m)}_j = \{x : \| x - c_j^{(m)} \|_2 \leq \| x - c_l^{(m)} \|_2, \forall l \neq j, l \in [k]\}.
\end{align}
In addition, we define the maximum and minimum pairwise separations between true centers $\{\theta_s^*\}_{s\in [k]}$ as
\begin{align}
    \Delta_{max}:=\max_{s\neq s'} \| \theta_s^* - \theta_{s'}^* \|_2 \qquad \text{and} \qquad \Delta_{min}:=\min_{s\neq s'} \| \theta_s^* - \theta_{s'}^* \|_2.
\end{align}

\subsection{Step 1 (Proof of the effectiveness of removing one-fit-many in Algorithm \ref{alg:FeCA-ClientUpdate})}
In this section, we prove the effectiveness of Algorithm \ref{alg:FeCA-ClientUpdate} in eliminating all one-fit-many centroids under the Stochastic Ball Model. Specifically, on each client $m\in [M]$, after applying Lloyd's algorithm, we obtain a set of $k$ centroids $\sC^{(m)}$.
If $\sC^{(m)}$ is a non-degenerate local minimum that is not the global optimum, then it must contain both one-fit-many and many-fit-one centroids, as discussed in \autoref{sec:alg_local}. 

Next, the algorithm identifies a candidate one-fit-many centroid $c_i^{(m)}\in\sC^{(m)}$ whose corresponding Voronoi set $\mathcal{V}_i^{(m)}$ contains data points with the largest standard deviation of distances to $c_i^{(m)}$. In this proof, with the infinite-sample limit, we derive the objective $G_i^{(m)}$ in \autoref{eqa:Gi} as
\begin{align}
    G_i^{(m)} = \int_{\mathcal{V}_i^{(m)}} \| x-c_i^{(m)} \|_2^2 f(x) dx.
    \label{lemma0:Gi}
\end{align}
In addition, the algorithm pinpoints two candidate many-fit-one centroids $c_p^{(m)}$ and $c_q^{(m)}$ from $\sC^{(m)}$, characterized by the minimal pairwise distance. 
Then we tentatively merge the respective Voronoi sets $\mathcal{V}_p^{(m)}$ and $\mathcal{V}_q^{(m)}$ to form a new region $\sD_j^{(m)}$ and subsequently obtain a corresponding centroid $c_j^{(m)}$. The objective $G_j^{(m)}$ in \autoref{eqa:Gj} is then calculated as
\begin{align}
    G_j^{(m)} = \int_{\sD_j^{(m)}} \| x-c_j^{(m)} \|_2^2 f(x)dx , \quad \text{where} \quad \sD_j^{(m)} = \mathcal{V}_p^{(m)} \cup \mathcal{V}_q^{(m)}.
\label{lemma0:Gj}
\end{align}

To test if these candidate centroids $c_i^{(m)}$ and \{$c_p^{(m)}, c_q^{(m)}\}$ are one-fit-many and many-fit-one centroids, respectively, Algorithm \ref{alg:FeCA-ClientUpdate} compares $G_i^{(m)}$ and $G_j^{(m)}$. If $G_i^{(m)}\ge G_j^{(m)}$, then $c_i^{(m)}$ is confirmed as a one-fit-many centroid to be removed, and $ \{c_p^{(m)}$, $c_q^{(m)}\}$ are many-fit-one centroids to be kept. This is proved in the following Lemma \ref{lemma:0}.

\begin{lemma}
    Under the Stochastic Ball Model, for some constants $\lambda\geq 3$ and $\eta\geq 5$, if 
    \begin{align}
        \Delta_{max}\geq 4\lambda^2 k^4 r \qquad \text{and} \qquad
        \Delta_{min}\geq 10\eta \lambda k^2 \sqrt{r\Delta_{max}},
        \label{lemma0:assumption_0}
    \end{align}
    then Algorithm \ref{alg:FeCA-ClientUpdate} eliminates all the one-fit-many centroids in a local minimizer $\sC^{(m)}$ on client $m$.
    \label{lemma:0}
\end{lemma}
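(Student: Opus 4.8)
The plan is to prove the dichotomy that governs the while loop: as long as a one-fit-many centroid survives in $\sC^{(m)}$, the detected candidate satisfies $G_i^{(m)}\geq G_j^{(m)}$ and is removed, whereas once no one-fit-many centroid remains, the comparison reverses to $G_i^{(m)}<G_j^{(m)}$ and the loop halts. Everything reduces to separating two scales. A cluster supported on a single ball has spread of order $r$, while any region straddling two distinct balls has spread of order $\Delta_{min}$, and the hypotheses force $\Delta_{min}\gg r$; indeed $\Delta_{min}\geq 10\eta\lambda k^2\sqrt{r\Delta_{max}}\geq 20\eta\lambda^2 k^4 r$ since $\Delta_{max}\geq 4\lambda^2 k^4 r$. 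The whole argument is then a matter of showing that the dominant $\Delta_{min}^2$ terms swamp all $O(r^2)$ corrections.

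First I would invoke the structure theorem of \citet{qian2021structures} recalled in \autoref{sec:alg_local}: every local minimizer $\sC^{(m)}$ partitions into one-fit-many and one/many-fit-one centroids. A counting argument then gives the key equivalence. Since the $k$ centroids must collectively be associated with all $k$ true centers, a one-fit-many centroid covering $s>1$ centers forces exactly $s-1$ ``extra'' centroids, which can appear only as a many-fit-one group ($t\geq 2$) on some center. Hence a one-fit-many centroid exists \emph{if and only if} two centroids are associated with a common center. Next I would validate the two detection rules. Because a one/many-fit-one centroid sits within $O(r)$ of its true center (quantified via \citet{qian2021structures}) while a one-fit-many centroid sits near the barycenter of $\geq 2$ balls separated by at least $\Delta_{min}$, the Voronoi set of a one-fit-many centroid has standard deviation of order $\Delta_{min}$, dominating the order-$r$ spread of single-ball clusters; thus the largest-standard-deviation cluster is one-fit-many exactly when one exists. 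Symmetrically, two centroids fitting the same center lie within $O(r)$ of each other, whereas any pair fitting distinct centers (or involving a one-fit-many centroid) is separated by $\gtrsim \Delta_{min}$, so the minimal-pairwise-distance pair $c_p^{(m)},c_q^{(m)}$ is a same-center many-fit-one pair precisely when such a pair exists.

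The analytic core is to bound the infinite-sample objectives of \autoref{lemma0:Gi} and \autoref{lemma0:Gj}. When a one-fit-many centroid is present, I would lower-bound $G_i^{(m)}$ by keeping only the covered balls: as $c_i^{(m)}$ is their barycenter and at least two covered centers are $\geq\Delta_{min}$ apart, the variance over $\mathcal{V}_i^{(m)}$ is of order $\Delta_{min}^2$ and each covered ball carries mass $\tfrac1k$, giving $G_i^{(m)}\gtrsim \Delta_{min}^2/k$ after subtracting the $O(r^2)$ ball-variance correction. Meanwhile $\sD_j^{(m)}=\mathcal{V}_p^{(m)}\cup\mathcal{V}_q^{(m)}$ covers a single ball up to negligible cross-ball leakage with $c_j^{(m)}\approx\theta_j^*$, so $G_j^{(m)}\lesssim r^2/k$; the separation gap then yields $G_i^{(m)}\geq G_j^{(m)}$ and removal. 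In the complementary case the roles swap: with every cluster a clean single ball, $G_i^{(m)}\lesssim r^2/k$, while the minimal pair now fits distinct centers, so merging spans two balls and $G_j^{(m)}\gtrsim \Delta_{min}^2/k$, forcing $G_i^{(m)}<G_j^{(m)}$ and correct termination. Finally I would close the induction: each pass with $G_i^{(m)}\geq G_j^{(m)}$ deletes one one-fit-many centroid while leaving the remaining one/many-fit-one centroids still within $O(r)$ of their centers, preserving the invariant; since there are at most $k-1$ one-fit-many centroids, the loop deletes all of them and then stops.

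The hard part will be the quantitative control of $G_i^{(m)}$ and $G_j^{(m)}$ under the Stochastic Ball Model, specifically bounding the cross-ball Voronoi leakage and the displacement of both one-fit-many and one/many-fit-one centroids from their associated true centers. This is exactly where the constants $4\lambda^2 k^4$ and $10\eta\lambda k^2$ must be tracked, so that every lower-order $O(r^2)$ term is absorbed into the dominant order-$\Delta_{min}^2$ gap; I expect these estimates to lean directly on the centroid-localization bounds of \citet{qian2021structures}.
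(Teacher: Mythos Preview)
Your strategy is sound but differs from the paper's in its central step. Rather than bounding $G_i^{(m)}$ and $G_j^{(m)}$ separately, the paper introduces a hypothetical $k$-centroid configuration $\sC_h^{(m)}$ obtained by (i) merging $\{c_p^{(m)},c_q^{(m)}\}$ into $c_j^{(m)}$ and (ii) splitting $c_i^{(m)}$ into two centroids, one at $c_i^{(m)}$ and one at the farthest point of $\mathcal{V}_i^{(m)}$. Writing $G^{(m)}=G_i^{(m)}+T_1+B$ and $G_h^{(m)}=G_j^{(m)}+T_2+B$ yields the identity $G_i^{(m)}-G_j^{(m)}=(G^{(m)}-G_h^{(m)})+(T_2-T_1)$. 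The first bracket is bounded below by $\Delta_{min}^2/(36k)$ via a direct appeal to Lemma~A.2 of \citet{hong2022geometric}; the second is shown nonnegative using the many/one-fit-one localization bound of \citet{qian2021structures} (giving $T_1\le \tfrac{1}{k}(\Delta_{min}/3)^2$ and $T_2\ge \tfrac{1}{k}(2\Delta_{min}/5)^2$). Your route---lower-bounding $G_i^{(m)}\gtrsim \Delta_{min}^2/k$ from the spread of $\ge 2$ covered balls and upper-bounding $G_j^{(m)}\lesssim r^2/k$ from single-ball containment---should also go through, and is more transparent; it further treats the correctness of the two detection rules and the termination branch, both of which the paper's proof leaves implicit (it simply \emph{assumes} the detected $c_i^{(m)}$ is one-fit-many and $c_p^{(m)},c_q^{(m)}$ are many-fit-one). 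What the paper's indirection buys is an off-the-shelf bound for the main gap, so the Voronoi-leakage bookkeeping you flag as ``the hard part'' is largely outsourced to \citet{hong2022geometric}; in your approach those leakage estimates must be carried out explicitly, and that is where the constants $4\lambda^2 k^4$ and $10\eta\lambda k^2$ will actually be consumed.
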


\begin{proof}
    If $\sC^{(m)}$ is a non-degenerate local minimum that is not globally optimal on client $m$, then it must contain both one-fit-many and many-fit-one centroids. Without loss of generality, assume that $c_i^{(m)}\in\sC^{(m)}$ is associated with multiple true centers $\theta_1^*,\dots, \theta_t^*$, where $t\geq 2$. 
    Additionally, let $\{c_p^{(m)}, c_q^{(m)}\}\in\sC^{(m)}$ (potentially along with other centroids) are associated with the same true center $\theta_{t+1}^*$. 
    Then the objective $G^{(m)}$ of $\sC^{(m)}$ is
    \begin{align}
        G^{(m)} = G_i^{(m)} + T_1 + B, \quad \text{where} \quad T_1 = \int_{\mathcal{V}_p^{(m)}\cup\mathcal{V}_q^{(m)}} \min \{ \| x-c_p^{(m)} \|_2^2, \| x-c_q^{(m)} \|_2^2 \} f(x) dx,
        \label{lemma0:G}
    \end{align}
    and $G_i^{(m)}$ is defined in (\ref{lemma0:Gi}).
    $B$ denotes the objective value contributed by the Voronoi set other than $\{\mathcal{V}_i^{(m)}, \mathcal{V}_p^{(m)},\mathcal{V}_q^{(m)}\}$.

   We construct a hypothetical solution $\sC_h^{(m)}$ by: (1) merging Voronoi sets $\mathcal{V}_p^{(m)}$ and $\mathcal{V}_q^{(m)}$ into a new region $\sD_j^{(m)}$ with the centroid $c_j^{(m)}$;  
   (2) dividing the Voronoi set $\mathcal{V}_i^{(m)}$ into two regions $\sD_{i_1}^{(m)}$ and $\sD_{i_2}^{(m)}$, with new centroids $c_{i_1}^{(m)}$ and $c_{i_2}^{(m)}$ respectively:
   \begin{align*}
       \sD_{i_1}^{(m)} = \{x\in \mathcal{V}_i^{(m)}: \| x-c_{i_1}^{(m)} \|_2\leq \|x-c_{i_2}^{(m)}\|_2 \}, \quad \sD_{i_2}^{(m)} = \{x\in \mathcal{V}_i^{(m)}: \| x-c_{i_2}^{(m)} \|_2\leq \|x-c_{i_1}^{(m)}\|_2 \};
   \end{align*}
   (3) keeping the remaining centroids in $\sC^{(m)}$. Thus, we have
    \begin{align}
        \sC_h^{(m)} = \sC^{(m)}\setminus \{c_i^{(m)}, c_p^{(m)}, c_q^{(m)}\} \cup \{c_{i_1}^{(m)}, c_{i_2}^{(m)}, c_j^{(m)}\}.
    \end{align} 
    The objective $G_h^{(m)}$ for this hypothetical solution $\sC_h^{(m)}$ is
    \begin{align}
        G_h^{(m)} = G_j^{(m)} + T_2 + B, \quad \text{where} 
        \quad T_2 = \int_{\mathcal{V}_i^{(m)}} \min \{ \|x-c_{i_1}^{(m)}\|_2^2, \|x-c_{i_2}^{(m)}\|_2^2 \} f(x)dx,
        \label{lemma0:Gh}
    \end{align}
    and $G_j^{(m)}$ is defined in (\ref{lemma0:Gj}). 
    By selecting centroids $c_{i_1}^{(m)} = c_i^{(m)}$ and $c_{i_2}^{(m)}=\arg\max_{x\in\mathcal{V}_i^{(m)}} \| x-c_i^{(m)} \|$ and applying Lemma \ref{lemma:hong}, we have
    \begin{align}
        G^{(m)}- G_h^{(m)} \geq \frac{\Delta_{min}^2}{36k}.
        \label{lemma0:lemma1_eqa}
    \end{align}
    This inequality implies that $\sC^{(m)}$ is a local solution with a suboptimal objective value $G^{(m)}$. 
    In Algorithm \ref{alg:FeCA-ClientUpdate}, instead of comparing $G^{(m)}$ and $G_h^{(m)}$, we evaluate $G_i^{(m)}$ and $G_j^{(m)}$ to determine whether $\sC^{(m)}$ is a local solution. The difference between objective values $G^{(m)}$ and $G_h^{(m)}$ is
    \begin{align}
    G^{(m)} - G_h^{(m)} = G_i^{(m)} - G_j^{(m)} - (T_2 - T_1).
    \label{lemma0:GO_GH}
    \end{align}
    Following the selection of centroids $c_{i_1}^{(m)}$ and $c_{i_2}^{(m)}$ as above, we have the proved claim that $\| c_{i_1}^{(m)} - c_{i_2}^{(m)} \|\geq \frac{\Delta_{min}}{2}-r$ from the proof of Lemma A.2 in \citet{hong2022geometric}. For the term $T_2$, it follows that
    \begin{align}
        T_2 &= \int_{\sD_{i_1}^{(m)}} \| x - c_{i_1}^{(m)} \|_2^2 f(x)dx + \int_{\sD_{i_2}^{(m)}} \| x-c_{i_2}^{(m)} \|_2^2f(x)dx \label{lemma0:T2_0}\\
        &\geq \int_{\sD_{i_2}^{(m)}} \left( \|c_{i_1}^{(m)} - c_{i_2}^{(m)}\|_2 - \| x - c_{i_1}^{(m)} \|_2 \right)^2f(x)dx \label{lemma0:T2_1}\\
        &\geq \int_{\sD_{i_2}^{(m)}} \left( \frac{\Delta_{min}}{2}-r -2r \right)^2 f(x)dx \label{lemma0:T2_2}\\
        &\geq \frac{1}{k}\left( \frac{\Delta_{min}}{2}-3r \right)^2 \label{lemma0:T2_3}\\
        &\geq \frac{1}{k} \left( \frac{2\Delta_{min}}{5} \right)^2 \label{lemma0:T2_4}
    \end{align}

    Equation (\ref{lemma0:T2_0}) follows from $\mathcal{V}_i^{(m)}=\sD_{i_1}^{(m)}\cup\sD_{i_2}^{(m)}$. Inequality (\ref{lemma0:T2_1}) uses the triangle inequality, and (\ref{lemma0:T2_2}) follows from the choice of $c_{i_1}^{(m)}$ and $c_{i_2}^{(m)}$. 
    Inequality (\ref{lemma0:T2_3}) stems from the ball component's volume being $\frac{1}{k}$.
    Inequality (\ref{lemma0:T2_4}) follows the claim that $r \leq \frac{\Delta_{min}}{20\eta\lambda^2 k^4}\leq \frac{\Delta_{min}}{30}$, which is derived from the separation assumption in (\ref{lemma0:assumption_0}).

    For the term $T_1$, we have
    \begin{align}
        T_1 &= \int_{\mathcal{V}_p^{(m)}} \| x-c_p^{(m)} \|_2^2 f(x)dx + \int_{\mathcal{V}_q^{(m)}} \| x-c_q^{(m)} \|_2^2f(x)dx \label{lemma0:T1_0} \\
        &\leq \int_{\mathcal{V}_p^{(m)}} \left( \| x-\theta_{t+1}^* \|_2 + \| \theta_{t+1}^*-c_p^{(m)} \|_2 \right)^2f(x)dx + \int_{\mathcal{V}_q^{(m)}} \left( \| x-\theta_{t+1}^* \|_2+ \| \theta_{t+1}^*-c_q^{(m)} \|_2 \right)^2f(x)dx \label{lemma0:T1_1} \\
        &\leq \int_{\mathcal{V}_p^{(m)}} \left( r + 8\lambda k^2\sqrt{r\Delta_{max}} \right)^2 f(x)dx + \int_{\mathcal{V}_q^{(m)}}\left( r + 8\lambda k^2\sqrt{r\Delta_{max}} \right)^2f(x)dx \label{lemma0:T1_2} \\
        & = \int_{\sD_j^{(m)}} \left( r + 8\lambda k^2\sqrt{r\Delta_{max}} \right)^2 f(x)dx \leq \frac{1}{k} \left( r + \frac{4\Delta_{min}}{5\eta} \right)^2 \leq \frac{1}{k} \left( \frac{\Delta_{min}}{3} \right)^2 \label{lemma0:T1_3}
    \end{align}
    Equation (\ref{lemma0:T1_0}) follows from the definition of the term $T_1$. Inequality (\ref{lemma0:T1_1}) follows from the triangle inequality. 
    Inequality (\ref{lemma0:T1_2}) utilizes the error bound from Theorem 1 (many/one-fit-one association) in \citet{qian2021structures}, with each ball component's radius as $r$. Inequality (\ref{lemma0:T1_3}) is based on the volume of each ball component being $\frac{1}{k}$, the proved claim $r\leq \frac{\Delta_{min}}{30}$, and the assumption that $\eta\geq5$.

    Combining the above inequalities, we have $T_2 -T_1\geq 0$. Thus, the equation (\ref{lemma0:GO_GH}) can be derived as
    \begin{align}
        G_i^{(m)} - G_j^{(m)} = G^{(m)}- G_h^{(m)} + (T_2-T_1) \geq G^{(m)}- G_h^{(m)} \geq \frac{\Delta_{min}^2}{36k} \geq 0.
        \label{lemma0:Gi_Gj}
    \end{align}
    Therefore, if $G_i^{(m)}\geq G_j^{(m)}$, it implies that $\sC^{(m)}$ is a local solution with a suboptimal objective value $G^{(m)}$. Subsequently, by comparing $G_i^{(m)}$ and $G_j^{(m)}$ for each candidate centroid $c_i^{(m)}$, Algorithm \ref{alg:FeCA-ClientUpdate} can effectively eliminates all one-fit-many centroids from $\sC^{(m)}$.
\end{proof}

\begin{lemma}
    Under the Stochastic Ball Model, for some constants $\lambda\geq 3$ and $\eta\geq 5$, if 
    \begin{align}
        \Delta_{max}\geq 4\lambda^2 k^4 r \qquad \text{and} \qquad
        \Delta_{min}\geq 10\eta \lambda k^2 \sqrt{r\Delta_{max}},
        \label{lemma0:assumption}
    \end{align}
    then when $c_{i_1}^{(m)}=c_i^{(m)}$ and $c_{i_2}^{(m)}=\arg\max_{x\in\mathcal{V}_i^{(m)}} \| x-c_i^{(m)} \|$, the following holds:
    \begin{align}
        G^{(m)} - G_h^{(m)} \geq \frac{\Delta_{min}^2}{36k}.
    \end{align}
    \label{lemma:hong}
\end{lemma}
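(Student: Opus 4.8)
The plan is to exhibit $\mathcal{C}_h^{(m)}$ as a strict improvement over the local solution $\mathcal{C}^{(m)}$ and to make the gap quantitative, following the merge-and-split argument behind Lemma A.2 of \citet{hong2022geometric}. Because $\mathcal{C}^{(m)}$ and $\mathcal{C}_h^{(m)}$ agree on every centroid outside $\{c_i^{(m)}, c_p^{(m)}, c_q^{(m)}\}$, the common term $B$ cancels, and from (\ref{lemma0:G}) and (\ref{lemma0:Gh}) the difference splits as
\begin{align*}
G^{(m)} - G_h^{(m)} = \underbrace{\left(G_i^{(m)} - T_2\right)}_{\text{gain from splitting } c_i^{(m)}} - \underbrace{\left(G_j^{(m)} - T_1\right)}_{\text{loss from merging } c_p^{(m)}, c_q^{(m)}}.
\end{align*}
I would lower-bound the split gain, upper-bound the merge loss, and combine; the target constant $\frac{1}{36}$ is loose, so generous slack is available in both estimates.

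For the split gain, since $c_{i_1}^{(m)} = c_i^{(m)}$ the two integrands over $\sD_{i_1}^{(m)}$ coincide and the gain collapses to $\int_{\sD_{i_2}^{(m)}}\left(\|x - c_i^{(m)}\|_2^2 - \|x - c_{i_2}^{(m)}\|_2^2\right)f(x)\,dx$, whose integrand is nonnegative on $\sD_{i_2}^{(m)}$. The farthest point $c_{i_2}^{(m)}$ sits at the far edge of one of the $t \geq 2$ balls, say $\mathbb{B}_s$, that the one-fit-many centroid $c_i^{(m)}$ covers; by the proof of Lemma A.2 in \citet{hong2022geometric}, $\rho := \|c_{i_1}^{(m)} - c_{i_2}^{(m)}\|_2 \geq \frac{\Delta_{min}}{2} - r$. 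The geometric observation I would establish is that, since $\mathbb{B}_s \subseteq \mathcal{V}_i^{(m)}$ and the perpendicular bisector of $c_i^{(m)}$ and $c_{i_2}^{(m)}$ lies at distance $\rho/2 > 2r$ from $c_{i_2}^{(m)}$ (using $r \leq \frac{\Delta_{min}}{30}$, which itself follows from combining the two separation assumptions), the entire ball $\mathbb{B}_s$ falls inside $\sD_{i_2}^{(m)}$. On $\mathbb{B}_s$ one has $\|x - c_{i_2}^{(m)}\|_2 \leq 2r$ and $\|x - c_i^{(m)}\|_2 \geq \rho - 2r$, so the integrand is at least $\rho(\rho - 4r) \gtrsim \Delta_{min}^2$; integrating against the ball mass $\frac{1}{k}$ then gives a gain of order $\frac{\Delta_{min}^2}{k}$ with a concrete constant comfortably exceeding $\frac{1}{36}$.

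For the merge loss, I would use that under the high-SNR structure the many-fit-one cells $\mathcal{V}_p^{(m)}$ and $\mathcal{V}_q^{(m)}$ together cover exactly the single well-separated ball $\mathbb{B}_{t+1}$, so $\sD_j^{(m)} = \mathbb{B}_{t+1}$, its mean $c_j^{(m)}$ lies within $r$ of $\theta_{t+1}^*$, and therefore $G_j^{(m)} \leq \frac{1}{k}(2r)^2$. As $T_1 \geq 0$, the loss obeys $G_j^{(m)} - T_1 \leq \frac{4r^2}{k}$, which by $r \leq \frac{\Delta_{min}}{30}$ is an order of magnitude smaller than the split gain. Subtracting the two bounds yields $G^{(m)} - G_h^{(m)} \geq \frac{\Delta_{min}^2}{36k}$ with room to spare.

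The main obstacle I anticipate is the geometric bookkeeping in the split-gain step: confirming that $c_{i_2}^{(m)}$ genuinely lands on the far boundary of a ball distinct from the one(s) nearest $c_i^{(m)}$, and that the bisector cut leaves the full ball $\mathbb{B}_s$ on the $c_{i_2}^{(m)}$ side so that the entire mass $\frac{1}{k}$ is paired with the large integrand. This is precisely where the imported separation bound $\|c_{i_1}^{(m)} - c_{i_2}^{(m)}\|_2 \geq \frac{\Delta_{min}}{2} - r$ and the smallness of $r$ relative to $\Delta_{min}$ carry the argument; the merge-loss estimate and the final arithmetic are routine once this containment is secured.
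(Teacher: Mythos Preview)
Your proposal is correct and follows essentially the same approach as the paper: both decompose $G^{(m)} - G_h^{(m)} = (G_i^{(m)} - T_2) - (G_j^{(m)} - T_1)$, invoke Lemma~A.2 of \citet{hong2022geometric} for the split-gain bound (the paper quotes the result $G_i^{(m)} - T_2 \geq \Delta_{min}^2/(18k)$ directly while you unpack the underlying geometry), bound the merge loss by $G_j^{(m)} - T_1 \leq 4r^2/k$, and finish via $r \leq \Delta_{min}/30$ from the separation assumptions.
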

\begin{proof}
    The difference between objective values of the solution $C^{(m)}$ and the hypothetical solution $C_h^{(m)}$ is
    \begin{align}
        G^{(m)} - G_h^{(m)} = (G_i^{(m)} - T_2) - (G_j^{(m)} - T_1).
    \end{align}
Lemma A.2 in \citet{hong2022geometric} establishes that by choosing centroids $c_{i_1}^{(m)}=c_i^{(m)}$ and $c_{i_2}^{(m)}=\arg\max_{x\in\mathcal{V}_i^{(m)}} \| x-c_i^{(m)} \|$, we have
    \begin{align}
        G_i^{(m)} - T_2 \geq \frac{\Delta_{min}^2}{18k}, \quad \text{and} \quad 
        G_j^{(m)}-T_1 \leq \frac{4r^2}{k},
    \end{align}
    which follows from the volumes of the ball components under the Stochastic Ball Model, each equating to $\frac{1}{k}$.
    Then, we derive
    \begin{align}
        G^{(m)}-G_h^{(m)} \geq \frac{\Delta_{min}^2}{18k} - \frac{4r^2}{k} \geq \frac{\Delta_{min}^2}{36k}.
        \label{lemma:hong_eqa2}
    \end{align}
    The second inequality in (\ref{lemma:hong_eqa2}) follows the claim that $r\leq \frac{\Delta_{min}}{20\eta\lambda^2k^4}\leq \frac{\Delta_{min}}{30}$, which is derived from the separation assumption in (\ref{lemma0:assumption}).

\end{proof}

\subsection{Step 2 (Proof of the radius assignment in Algorithm \ref{alg:FeCA-RadiusAssign2})}

In this section, we present a theoretical analysis demonstrating the effectiveness of the radius assignment in Algorithm $\ref{alg:FeCA-RadiusAssign2}$, ensuring coverage of all centroids associated with one true center. On one hand, following the removal of all one-fit-many centroids by Algorithm \ref{alg:FeCA-ClientUpdate} (proved in step 1), all returned centroids $\mathcal{C}=\{\mathcal{C}^{(1)},\dots, \mathcal{C}^{(M)}\}$ on the server end are concentrated around true centers $\{\theta_s^*\}_{s\in [k]}$. 
Thus, this is equivalently another clustering problem on centroids $\mathcal{C}$ with (extremely) high SNR separation condition. 
Let $\{\mathcal{S}^*_1,\dots,\mathcal{S}^*_k\}$ be the ground truth clustering sets of all returned centroids $\mathcal{C}$,
where for each $s\in[k]$, centroids within $\mathcal{S}^*_s$ are all associated with the one true center $\theta_s^*$.
Algorithm \ref{alg:FeCA-ServerAggregation} classifies these returned centroids $\mathcal{C}$ into $k$ sets $\{\mathcal{S}_1,\dots,\mathcal{S}_k\}$, using the radius in $\mathcal{R}=\{\mathcal{R}^{(1)},\dots,\mathcal{R}^{(M)}\}$ determined by Algorithm \ref{alg:FeCA-RadiusAssign2}. 

On each client $m\in[M]$, Algorithm \ref{alg:FeCA-RadiusAssign2} first generates a new set of centroids $\Tilde{\sC}^{(m)}$ by discarding any potential many-fit-one centroid from ${\sC}^{(m)}$. It then identifies the minimal pairwise distance $\Tilde{\Delta}_{min}^{(m)}$ in $\Tilde{\sC}^{(m)}$ aiming to approximate $\Delta_{min}$, formulated as:
\begin{align}
    \Tilde{\Delta}_{min}^{(m)} = 
\min_{\Tilde{c}_i,\Tilde{c}_j\in\Tilde{\sC}^{(m)}, i\neq j} \| \Tilde{c}_i - \Tilde{c}_j \|_2.
\end{align}

Subsequently, Algorithm \ref{alg:FeCA-RadiusAssign2} calculates a uniform radius $r^{(m)} = \frac{1}{2}\Tilde{\Delta}_{min}^{(m)}$, and assigns it to every centroid in $\sC^{(m)}$. These centroid-radius pairs are then sent to the server.

\begin{lemma}
    Under the Stochastic Ball Model, for some constant $\lambda\geq 3$ and $\eta \geq 5$, if
    \begin{align}
        \Delta_{max}\geq 4\lambda^2k^4r \qquad \text{and} \qquad \Delta_{min} \geq 10\eta \lambda k^2\sqrt{r\Delta_{max}},
        \label{lemma1:condition}
    \end{align}
    then for centroids in $\mathcal{S}^*_s$ which are associated with the true center $\theta_s^*, s\in[k]$, we have the following inequality holds on all clients $m\in[M]$:
    \begin{align}
        \max_{c_i,c_j\in \mathcal{S}^*_s, i\neq j} \| c_j - c_i \|_2 \leq \frac{1}{2}\Tilde{\Delta}_{min}^{(m)}, \quad \forall s\in[k].
    \end{align}
    \label{lemma:1}
\end{lemma}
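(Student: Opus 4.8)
The plan is to reduce the claim to two elementary triangle-inequality estimates, both controlled by the per-centroid error bound for many/one-fit-one associations, and then to combine them with the separation condition (\ref{lemma1:condition}) so that the diameter of each true-center cluster of returned centroids is at most half the smallest between-cluster gap measured on any client. Write $\epsilon := 8\lambda k^2\sqrt{r\Delta_{max}}$ for the accuracy guaranteed by Theorem~1 of \citet{qian2021structures}: every many/one-fit-one centroid $c$ associated with a true center $\theta_s^*$ satisfies $\|c-\theta_s^*\|_2 \le \epsilon$. This is exactly the bound already invoked at (\ref{lemma0:T1_2}), so I would take it as given.

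First I would bound the cluster diameter from above. Since Step~1 (Lemma~\ref{lemma:0}) guarantees that no one-fit-many centroid survives, every centroid in $\mathcal{S}^*_s$ is a many/one-fit-one centroid lying within $\epsilon$ of $\theta_s^*$. A single triangle inequality then gives, for any $c_i, c_j \in \mathcal{S}^*_s$,
\begin{align*}
\|c_i - c_j\|_2 \le \|c_i - \theta_s^*\|_2 + \|\theta_s^* - c_j\|_2 \le 2\epsilon,
\end{align*}
uniformly over $s$ and over all clients contributing centroids to $\mathcal{S}^*_s$.

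Next I would lower-bound $\Tilde{\Delta}_{min}^{(m)}$ on each client $m$. The key structural point is that the discarding rule in Algorithm~\ref{alg:FeCA-RadiusAssign2} leaves $\Tilde{\sC}^{(m)}$ with at most one centroid per true center, so its minimum pairwise distance is realized between two centroids associated with \emph{distinct} true centers $\theta_a^*, \theta_b^*$. Applying the reverse triangle inequality together with the per-centroid bound and the separation hypothesis $\Delta_{min}\ge 10\eta\lambda k^2\sqrt{r\Delta_{max}} = \tfrac{5\eta}{4}\epsilon$ yields
\begin{align*}
\Tilde{\Delta}_{min}^{(m)} \ge \Delta_{min} - 2\epsilon .
\end{align*}
Here I would have to argue carefully that the removal test $r'_i + r'_j > d_{ij}$ eliminates precisely the redundant co-located (many-fit-one) centroids while retaining a single representative near each occupied true center: under the high-SNR separation the Voronoi radii $r'_i, r'_j$ of one-fit-one centroids are $O(r+\epsilon)$ while their gap is $\ge \Delta_{min}-2\epsilon \gg r$, so genuine one-fit-one centroids are never discarded, whereas co-located centroids splitting a single ball do trip the test. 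This verification—rather than any calculation—is the delicate part, since it is what certifies that $\Tilde{\Delta}_{min}^{(m)}$ measures a between-center gap and not a within-center one.

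Finally I would assemble the two estimates. Combining the diameter bound $2\epsilon$ with $\tfrac12\Tilde{\Delta}_{min}^{(m)} \ge \tfrac12(\Delta_{min}-2\epsilon)$, it suffices to verify $\Delta_{min}\ge 6\epsilon$; and since $\eta\ge 5$ gives $\Delta_{min} \ge \tfrac{5\eta}{4}\epsilon \ge \tfrac{25}{4}\epsilon > 6\epsilon$, the desired bound $\max_{c_i,c_j\in\mathcal{S}^*_s,\,i\neq j}\|c_i-c_j\|_2 \le \tfrac12\Tilde{\Delta}_{min}^{(m)}$ follows for every $s$ and every $m$. The constant $\eta\ge 5$ enters only at this final margin, so the main obstacle is not the arithmetic but the structural bookkeeping: confirming the composition of $\Tilde{\sC}^{(m)}$ after the discarding step and threading the constants consistently through the two triangle inequalities.
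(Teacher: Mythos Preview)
Your proposal is correct and follows essentially the same route as the paper: bound the intra-cluster diameter by $2\epsilon$ via the per-centroid error from \citet{qian2021structures}, lower-bound $\Tilde{\Delta}_{min}^{(m)}$ in terms of $\Delta_{min}$ by the reverse triangle inequality, and finish with the separation hypothesis and $\eta\ge 5$. Your arithmetic is slightly more direct (you use only the one-sided bound $\Tilde{\Delta}_{min}^{(m)}\ge\Delta_{min}-2\epsilon$ rather than the paper's two-sided $|\Tilde{\Delta}_{min}^{(m)}-\Delta_{min}|\le\frac{8}{5\eta}\Delta_{min}$ and its rearrangement $\Delta_{min}\le\frac{5\eta}{5\eta-8}\Tilde{\Delta}_{min}^{(m)}$), and you are more explicit than the paper about the structural point that after the discarding step $\Tilde{\Delta}_{min}^{(m)}$ is realized between centroids at distinct true centers---the paper simply asserts the approximation bound without commenting on this.
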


\begin{proof}
    For centroids in $\mathcal{S}_s^*$ which are associated with one true center $\theta_s^*, s\in[k]$, we upper bound their maximum pairwise distance using the triangle inequality:
    \begin{align}
        \max_{c_i,c_j\in\mathcal{S}^*_s, i\neq j} \| c_j - c_i \|_2
        \leq & \max_{c_i,c_j\in\mathcal{S}^*_s, i\neq j} \left( \| c_j - \theta_s^* \|_2 + \| \theta_s^*-c_i \|_2 \right) \nonumber \\
        \leq & \quad 2\max_{c_i \in \mathcal{S}^*_s} \| c_i - \theta_s^* \|_2.
        \label{lemma1:proof_eqa1}
    \end{align}
    Under the Stochastic Ball Model with radius $r$, we apply the error bound from Theorem 1 (many/one-fit-one association) in \citet{qian2021structures} to the above inequality, and for some constant $\lambda\geq 3$ we have
    \begin{align}
        \max_{c_i,c_j\in\mathcal{S}^*_s, i\neq j} \| c_j - c_i \|_2 \leq 2\max_{c_i \in \mathcal{S}^*_s} \| c_i - \theta_s^* \|_2 \leq 16\lambda k^2\sqrt{r\Delta_{max}}.
        \label{lemma1:proof_eqa2}
    \end{align}
    By combining the above inequality and the assumption $\Delta_{min}\geq 10\eta \lambda k^2\sqrt{r\Delta_{max}}$ in (\ref{lemma1:condition}), we obtain
    \begin{align}
        \max_{c_i,c_j\in\mathcal{S}^*_s, i\neq j} \| c_j - c_i \|_2 \leq \frac{8}{5\eta} \Delta_{min},
        \label{lemma1:proof_eqa3}
    \end{align}
    where the constant $\eta \geq 5$.
    Next, we derive the approximation error between $\Tilde{\Delta}_{min}^{(m)}$ and $\Delta_{min}$ as:
    \begin{align}
        | \Tilde{\Delta}_{min}^{(m)} - \Delta_{min} | \leq & \max_{c_s\in\mathcal{S}_s^*} \| c_s - \theta_s^* \|_2 + \max_{c_{s'}\in\mathcal{S}_{s'}^*} \| c_{s'} - \theta_{s'}^* \|_2, \qquad s\in [k], s'\in [k], s\neq s' \nonumber \\
        \leq & \quad 2 \max_{c_s\in\mathcal{S}_s^*} \| c_s - \theta_s^* \|_2, \qquad s\in [k].
        \label{lemma1:proof_eqa4}
    \end{align}
    Given that all clients follow the same mixture distributions under the Stochastic Ball Model, the above inequality holds for all clients $m\in[M]$.
    Similar to inequality (\ref{lemma1:proof_eqa1}), we again utilize the error bound from Theorem 1 (many/one-fit-one association) in \citet{qian2021structures}, and obtain:
    \begin{align}
        | \Tilde{\Delta}_{min}^{(m)} - \Delta_{min} | \leq \frac{8}{5\eta} \Delta_{min}, \qquad \forall m\in[M].
        \label{lemma1:proof_eqa5}
    \end{align}
    Reorganizing the terms in inequality (\ref{lemma1:proof_eqa5}) gives
    \begin{align}
        \frac{5\eta}{5\eta + 8} \Tilde{\Delta}_{min}^{(m)} \leq \Delta_{min} \leq \frac{5\eta}{5\eta - 8} \Tilde{\Delta}_{min}^{(m)}, \qquad \forall m\in[M].
        \label{lemma1:proof_eqa6}
    \end{align}
    Then by combining inequalities (\ref{lemma1:proof_eqa3}) and (\ref{lemma1:proof_eqa6}), for each $s\in[k]$, we obtain 
    \begin{align}
        \max_{c_i,c_j\in\mathcal{S}^*_s, i\neq j} \| c_j - c_i \|_2 \leq \frac{8}{5\eta - 8}\Tilde{\Delta}_{min}^{(m)} \leq \frac{1}{2}\Tilde{\Delta}_{min}^{(m)}, \qquad \forall m\in[M],
    \end{align}
    where the last inequality follows from the assumption that $\eta \geq 5$.
\end{proof}

\subsection{Proof of Theorem \ref{thm:main}}

In this section, we complete the proof of our main theorem by demonstrating: (1) Algorithm \ref{alg:FeCA-ServerAggregation} correctly classifies all returned centroids in alignment with their corresponding true centers, utilizing the radius assigned by Algorithm \ref{alg:FeCA-RadiusAssign2}; (2) we derive the error bound between the final output centroids $\sC^*$ from  Algorithm \ref{alg:FeCA} and the corresponding true centers.

\begin{proof}
    Let cluster labels be $s = 1,\dots,k$. 
    During the grouping process of all returned centroids $\sC$ on the server end, Algorithm \ref{alg:FeCA-ServerAggregation} first selects a centroid in $\mathcal{C}$ with the largest radius.
    Without loss of generality, we assume that this selected centroid $c_s\in\sC$ is returned by the client $m$ and associated with the true center $\theta_s^*$.
    Thus, this centroid belongs to the ground truth clustering set as $c_s\in\mathcal{S}_s^*$ and its assigned radius is $r_s= \frac{1}{2}\Tilde{\Delta}_{min}^{(m)}$.
    Algorithm \ref{alg:FeCA-ServerAggregation} then groups the centroids located within the ball centered at $c_s$ with radius $r_s$, resulting in the formation of the grouped cluster $\mathcal{S}_s = \{ c : c\in\mathcal{C}, \| c-c_s \| \leq r_s \}$.

    On one hand, Lemma \ref{lemma:1} implies that for each $s\in[k]$, the maximum pairwise distance between centroids in $\mathcal{S}_s^*$ is bounded by $\frac{1}{2}\Tilde{\Delta}_{min}^{(m)}$ for any client $m$. Consequently, $\mathcal{S}^*_s \in \mathcal{S}_s$ can be readily inferred based on the definition of $\mathcal{S}_s$.

    On the other hand, for other centroids $c_{s'}\in \mathcal{S}^*_{s'}, s'\in[k], s\neq s'$, we have
    \begin{align}
        \| c_{s'} - c_s \|_2 &\geq \| \theta_{s'}^* - \theta_s^* \|_2 - \| c_{s'} - \theta_{s'}^* \|_2 - \| c_s - \theta_s^*\|_2 \nonumber \\
        &\geq \Delta_{min} - \| c_{s'} - \theta_{s'}^* \|_2 - \| c_s - \theta_s^*\|_2.
        \label{thm1:proof_eqa1}
    \end{align}
    Utilizing the error bound from Theorem 1 (many/one-fit-one association) in \citet{qian2021structures} gives
    \begin{align}
        \| c_{s'} - c_s \|_2 \geq \Delta_{min} - 16\lambda k^2\sqrt{r\Delta_{max}} \geq \Delta_{min} - \frac{8}{5\eta} \Delta_{min},
        \label{thm1:proof_eqa2}
    \end{align}
    where the last step follows from the assumption $\Delta_{min}\geq 10\eta \lambda k^2\sqrt{r\Delta_{max}}$. Applying the lower bound in (\ref{lemma1:proof_eqa6}) to the above inequality, for any client $m\in[M]$, it follows that
    \begin{align}
        \| c_{s'} - c_s \|_2 \geq \frac{5\eta-8}{5\eta+8}\Tilde{\Delta}_{min}^{(m)} > \frac{1}{2} \Tilde{\Delta}_{min}^{(m)},
        \label{thm1:proof_eqa3}
    \end{align}
    where the constant $\eta \geq 5$. Thus, following the definition of $\mathcal{S}_s$, the above inequality implies that centroids $c_{s'}\in\mathcal{S}_{s'}^*, s\neq s'$ do not belong to $\mathcal{S}_s$. Combining the proved claim that $\mathcal{S}_s^*\in\mathcal{S}_s$, this suggests that $\mathcal{S}_s = \mathcal{S}_s^*, s\in[k]$, up to a permutation of cluster labels. This further implies that Algorithm \ref{alg:FeCA-ServerAggregation} correctly classifies all returned centroids in $\sC$ according to their associated true centers.

    For each $s\in[k]$, Algorithm \ref{alg:FeCA-ServerAggregation} computes the mean of $\sS_s$, denoted as $c_s^* = \text{mean}(\sS_s)$. The collection of these mean centroids, $\sC^*=\{c_1^*, \dots, c_k^*\}$, constitutes the final set of centroids output by Algorithm \ref{alg:FeCA}. 
    Then the proximity of $c_s^*$ to its associated true center $\theta_{s}^*$ can be bounded as:
    \begin{align}
        \| c_s^* - \theta_{s}^* \|_2 &\leq \max_{c\in \sS_s} \| c - \theta_{s}^* \|_2 \leq \max_{c\in\sS_s^*} \| c-\theta_s^* \|_2 , \quad \forall s\in[k]
        \label{thm1:proof_eqa4}\\
        &\leq 8\lambda k^2\sqrt{r \Delta_{max}} \leq \frac{4}{5\eta}\Delta_{min},
        \label{thm1:proof_eqa5}
    \end{align}
    for some constants $\eta \geq 5$. The inequality (\ref{thm1:proof_eqa4}) follows from the proved statement $\sS_s = \sS_s^*$, $s\in[k]$. The inequality (\ref{thm1:proof_eqa5}) first utilizes the error bound from Theorem 1 (many/one-fit-one association) in \citet{qian2021structures}, followed by the application of the separation assumption $\Delta_{min}\geq 10\eta \lambda k^2\sqrt{r\Delta_{max}}$. 
    Thus, any output centroid $c_s^*\in\sC^*$ from Algorithm \ref{alg:FeCA} is close to some true center $\theta_{s'}^*$ as:
    \begin{align}
        \| c_s^* - \theta_{s'}^* \|_2 \leq \frac{4}{5\eta} \Delta_{min},
    \end{align}
    thereby proving Theorem \ref{thm:main}.

\end{proof}

\section{Evaluation on the radius assigned by Algorithm \ref{alg:FeCA-RadiusAssign2-empirical} (\emph{Empirical}).}
\label{sec:appendix_radius}

\begin{figure}[!b]
    \centering
    \includegraphics[width=\columnwidth]{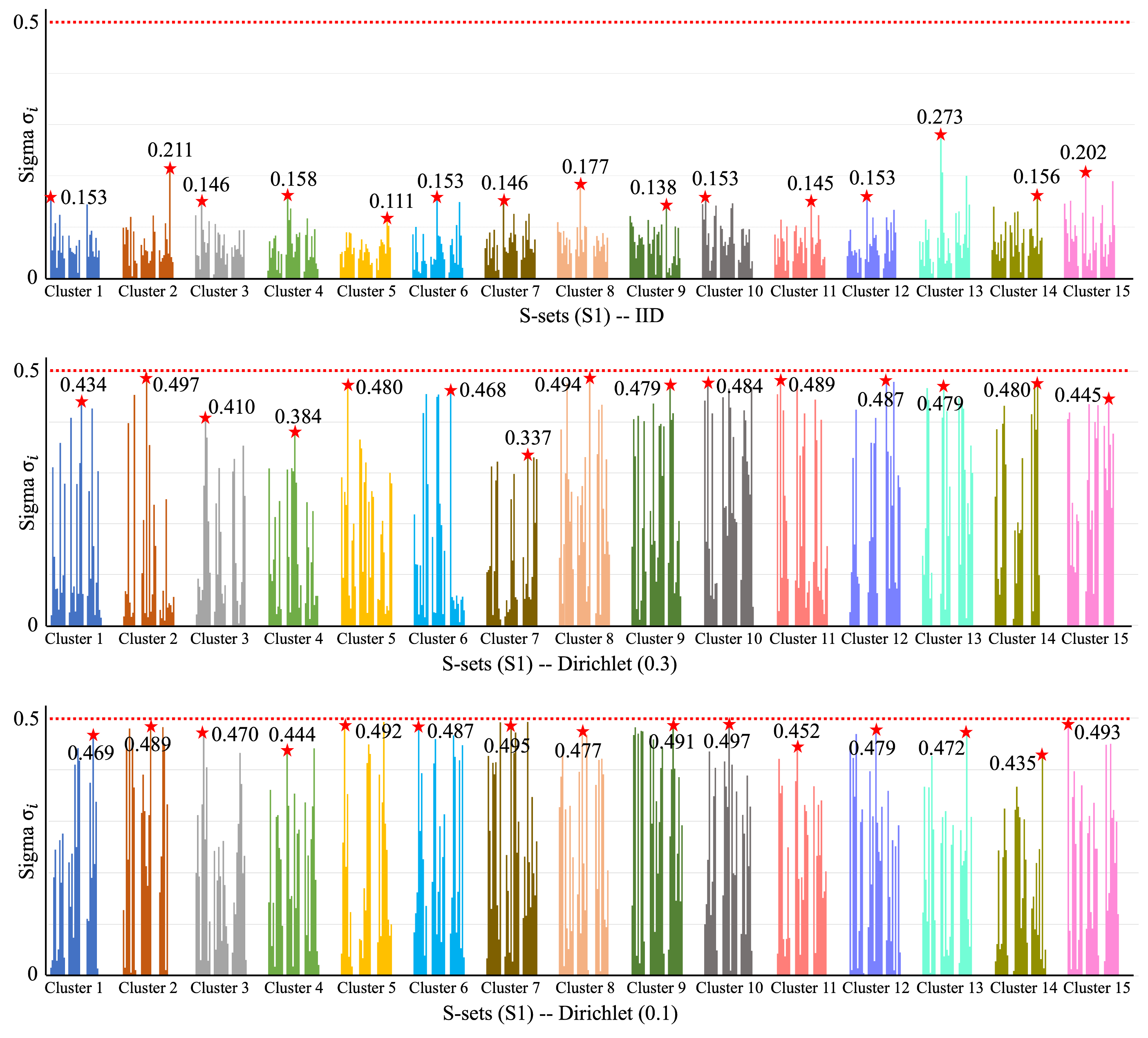}
    \caption{\textbf{Evaluation of $\sigma$ on S-sets (S1) across three data sample scenarios.} $\sigma_i$ for $k$ clusters is represented in different colors. The values of $\sigma_i$ for all returned centroids $c_i$ are reported over $3$ random runs, with the red star marking the maximum $\sigma_i$ observed in three runs. \emph{A $\sigma_i$ value below $0.5$ indicates that, the server effectively groups centroid $c_i$ utilizing the radius $r_s$ assigned by Algorithm \ref{alg:FeCA-RadiusAssign2-empirical}.}
    }
    \label{fig:sigma_S1}
\end{figure}

This section evaluates the radius produced by the empirical algorithm variant, Algorithm \ref{alg:FeCA-RadiusAssign2-empirical}. While the proof of our main theorem characterizes the performance of the Algorithm \ref{alg:FeCA-RadiusAssign2}, it is important to note that most real-world scenarios do not satisfy a homogeneous data sample assumption. Consequently, we have implemented an empirical procedure that assigns a unique radius to each returned centroid. In this context, it is also not necessary to remove many-fit-one centroids on clients, as these centroids typically concentrate around true centers and will be grouped together via the aggregation algorithm on the server. This empirical algorithm variant is specifically designed to adapt our main algorithm \FeCA for more general scenarios. 

In this section, we empirically assess the radius assigned by Algorithm \ref{alg:FeCA-RadiusAssign2-empirical} under both IID and non-IID scenarios, demonstrating its effectiveness for the proceeding aggregation step in Algorithm \ref{alg:FeCA-ServerAggregation}. Specifically, our objective is to empirically show that utilizing the selected centroid-radius pair $(c_s,r_s), s\in[k]$ (assigned by Algorithm \ref{alg:FeCA-RadiusAssign2-empirical}), Algorithm \ref{alg:FeCA-ServerAggregation} can effectively group all returned centroids corresponding to the same true center $\theta_s^*$ on the server end. Recall that we denote a set of returned centroids associated with one true center $\theta_s^*$ as $\sS_s^*$.

Essentially, we aim to validate that the distance between any centroid $c\in\sS_s^*$ to $c_s$ is bounded by the assigned radius $r_s$. Our goal is thus formulated as follows:
\begin{align}
    \max_{c_i\in \sS_s^*} \| c_i - c_s \|_2 \leq r_s.
    \label{eqa:empirical_goal}
\end{align}
The left side of the above inequality indicates the maximum distance between any two centroids in $\sS_s^*$, and it can be further elaborated as
\begin{align}
    \max_{c_i\in \sS_s^*} \| c_i - c_s \|_2 \leq \max_{c_s\in\sS_s^*} \left(\| c_i-\theta_s^* \|_2 + \| \theta_s^* - c_s \|_2 \right) \leq 2 \max_{c_i\in\sS_s^*} \| c_i-\theta_s^* \|_2.
\end{align}
Then our goal in (\ref{eqa:empirical_goal}) can be reformulated as
\begin{align}
    \max_{c_i\in\sS_s^*} \frac{\| c_i-\theta_s^* \|_2}{r_s} \leq \frac{1}{2}.
    \label{eqa:empirical_goal1}
\end{align}
Next, we present empirical results on the synthetic dataset, S-sets (S1), with known ground truth centers $\{\theta^*_s\}_{s\in[k]}$. These results demonstrate that the inequality (\ref{eqa:empirical_goal1}) holds across both IID and non-IID cases. For this purpose, we define a new parameter $\sigma$ as
\begin{align}
    \sigma_i:= \frac{\| c_i-\theta_s^* \|_2}{r_s}, \quad c_i\in\sS_s^* \quad s\in[k].
\end{align}
This parameter $\sigma_i$ represents the distance between the returned centroid $c_i$ and its fitted true center $\theta_s^*$ scaled by the radius $r_s$. Our empirical results demonstrate that values of $\sigma_i$ remain below $0.5$ for all returned centroids in $\sC=\{\sC^{(1)},\dots,\sC^{(M)}\}$, in accordance with our goal inequality (\ref{eqa:empirical_goal1}).

\paragraph{Results.}
\autoref{fig:sigma_S1} illustrates the evaluation of $\sigma_i$, as determined using the radius assigned by Algorithm \ref{alg:FeCA-RadiusAssign2-empirical}, in varied inhomogeneous settings on S-sets (S1). This figure presents $\sigma_i$ values for all returned centroids across three random runs, categorized according to their respective true centers in different colors. Results consistently indicate that $\sigma_i$ values stay below $0.5$, thereby empirically substantiating the validity of the inequality (\ref{eqa:empirical_goal1}) in our analysis. Consequently, it demonstrates the efficacy of aggregating centroids using the radius assigned by Algorithm \ref{alg:FeCA-RadiusAssign2-empirical} on the server end.

We note that the number of returned centroids associated with each true center may vary. It is because we selectively remove one-fit-many centroids on the client side, while it is possible for many-fit-one centroids to be present. In some extreme non-IID cases, assuming a client only contains a few secluded data points from one true cluster but they all far deviate from the true center, it may occur that a returned centroid is not covered by the radius. Then it will be considered noisy and discarded by Algorithm \ref{alg:FeCA-ServerAggregation}. Concretely, the recovered centroids of this cluster will be contributed by returned centroids from other clients.

\section{Supplementary experiments on the synthetic dataset}
\label{app:more_experiments}
This section presents additional experimental results on the synthetic dataset S-sets \citet{ClusteringDatasets}. The S-sets comprise four sets: S1, S2, S3, and S4, each consisting of $15$ Gaussian clusters in $2$-dimensional data with varying degrees of overlap, specifically $9\%, 22\%, 41\%,$ and $44\%$. For the visualization of S-sets, refer to \autoref{fig:Ssets} from their paper \citet{ClusteringDatasets}.
\begin{figure}[htb]
    \centering
    \includegraphics[width=.8\columnwidth]{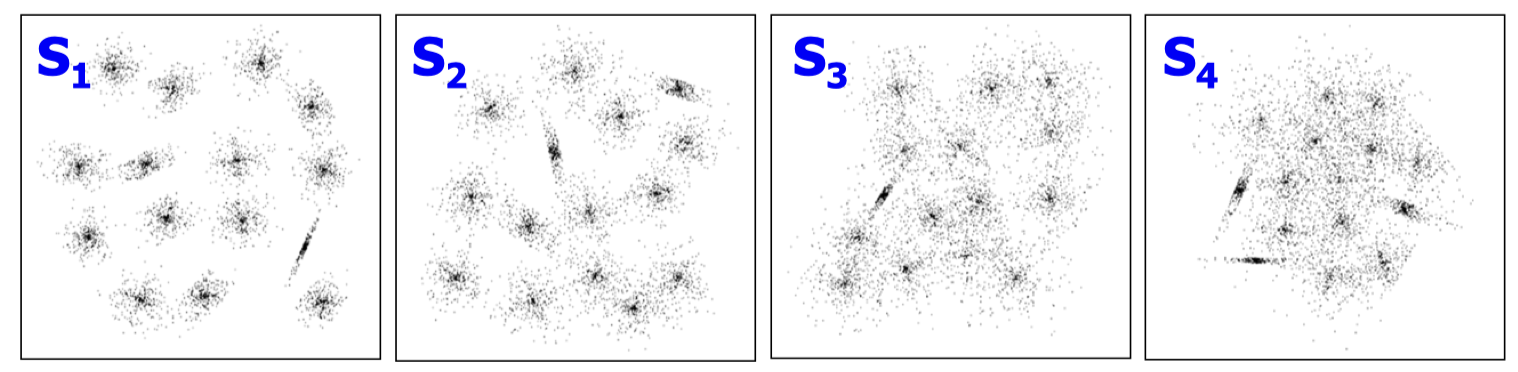}
    \vspace{-.1in}
    \caption{\textbf{Visualizations of S-sets.}}
    \label{fig:Ssets}
\end{figure}

\subsection{Evaluations on the clustering assignments}
\label{app:synthetix}

\begin{table}[!b]
\caption{\textbf{Purity and NMI on S-sets under three data sample scenarios.}}
\label{exp:synthetic-purity}
\centering
\footnotesize
\setlength{\tabcolsep}{2pt}
\renewcommand{\arraystretch}{1.1}
\begin{tabular}{lc ccc c ccc c ccc c ccc }
\toprule
\textbf{Purity} && \multicolumn{3}{c}{S-sets (S1)} && \multicolumn{3}{c}{S-sets (S2)} && \multicolumn{3}{c}{S-sets (S3)} && \multicolumn{3}{c}{S-sets (S4)} \\
\cmidrule{3-5}\cmidrule{7-9}\cmidrule{11-13}\cmidrule{15-17}
Method && IID & (0.3) & (0.1) && IID & (0.3) & (0.1) && IID & (0.3) & (0.1) && IID & (0.3) & (0.1) \\
\midrule
M-Avg && 0.99 & 0.79 & 0.69 && 0.93 & 0.74 & 0.68 
&& 0.80 & 0.66 & 0.61 && 0.75 & 0.64 & 0.57\\

FFCMv1 (Rd=1) && 0.97 & 0.69 & 0.66 && 0.95 & 0.68 & 0.58 
&& 0.83 & 0.60 & 0.54 && 0.79 & 0.60 & 0.53\\

FFCMv1 (Rd=10) && 0.98 & 0.78 & 0.68 && 0.96 & 0.74 & 0.71 
&& 0.85 & 0.66 & 0.57 && 0.80 & 0.61 & 0.55\\

FFCMv2 (Rd=1) && 0.95 & 0.69 & 0.66 && 0.95 & 0.63 & 0.61 
&& 0.83 & 0.58 & 0.52 && 0.78 & 0.57 & 0.54\\

FFCMv2 (Rd=10) && 0.99 & 0.80 & 0.76 && 0.97 & 0.70 & 0.72 
&& 0.86 & 0.65 & 0.61 && 0.80 & 0.60 & 0.59\\

$k$-FED ($k'$=5) && 0.62 & 0.76 & 0.75 && 0.60 & 0.71 & 0.71
&& 0.55 & 0.61 & 0.60 && 0.55 & 0.62 & 0.58\\

\FeCA && \textbf{0.99} & \textbf{0.98} & \textbf{0.96} && \textbf{0.97} & \textbf{0.95} & \textbf{0.90} && \textbf{0.86} & \textbf{0.80} & \textbf{0.78} && \textbf{0.80} & \textbf{0.73} & \textbf{0.65} \\
\midrule
Centralized && \multicolumn{3}{c}{0.97} & & \multicolumn{3}{c}{0.96} & & \multicolumn{3}{c}{0.83} & & \multicolumn{3}{c}{0.77} \\
\hline
\end{tabular}
\begin{tabular}{lc ccc c ccc c ccc c ccc }
\toprule
\textbf{NMI} && \multicolumn{3}{c}{S-sets (S1)} && \multicolumn{3}{c}{S-sets (S2)} && \multicolumn{3}{c}{S-sets (S3)} && \multicolumn{3}{c}{S-sets (S4)} \\
\cmidrule{3-5}\cmidrule{7-9}\cmidrule{11-13}\cmidrule{15-17}
Method && IID & (0.3) & (0.1) && IID & (0.3) & (0.1) && IID & (0.3) & (0.1) && IID & (0.3) & (0.1) \\
\midrule
M-Avg &&  0.98 & 0.85 & 0.79 && 0.92 & 0.80 & 0.78 
&& 0.77 & 0.71 & 0.69 && 0.70 & 0.65 & 0.62\\

FFCMv1 (Rd=1) &&  0.98 & 0.80 & 0.78 && 0.93 & 0.77 & 0.71
&& 0.78 & 0.67 & 0.64 && 0.72 & 0.63 & 0.60\\

FFCMv1 (Rd=10) &&  0.98 & 0.85 & 0.80 && 0.94 & 0.80 & 0.78
&& 0.79 & 0.71 & 0.66 && 0.72 & 0.64 & 0.61\\

FFCMv2 (Rd=1) &&  0.97 & 0.80 & 0.78 && 0.93 & 0.75 & 0.72
&& 0.78 & 0.67 & 0.63 && 0.71 & 0.62 & 0.60\\

FFCMv2 (Rd=10) &&  0.99 & 0.87 & 0.85 && 0.95 & 0.78 & 0.79
&& 0.79 & 0.70 & 0.68 && 0.72 & 0.63 & 0.63\\

$k$-FED ($k'$=5) &&  0.80 & 0.84 & 0.83 && 0.75 & 0.80 & 0.80
&& 0.66 & 0.69 & 0.68 && 0.62 & 0.65 & 0.63\\

\FeCA &&  \textbf{0.99} & \textbf{0.96} & \textbf{0.95} && \textbf{0.95} & \textbf{0.94} & \textbf{0.90} && \textbf{0.80} & \textbf{0.77} & \textbf{0.75} && \textbf{0.72} & \textbf{0.69} & \textbf{0.66} \\
\midrule
Centralized && \multicolumn{3}{c}{0.98} & & \multicolumn{3}{c}{0.94} & & \multicolumn{3}{c}{0.79} & & \multicolumn{3}{c}{0.71} \\
\hline
\end{tabular}
\end{table}

This section shifts focus to the evaluation of clustering assignments on the synthetic dataset S-sets, diverging from the analysis of recovered centroids. While \autoref{exp:synthetic_l2} in the paper assesses the $\ell_2$-distance between recovered centroids and known ground truth centers, we herein present the average results of Purity and NMI across $10$ random runs in \autoref{exp:synthetic-purity} under three different data sample scenarios. The findings consistently demonstrate that our algorithm \FeCA surpasses all baseline algorithms in performance across every tested scenario, underlining its effectiveness in federated clustering tasks.

\subsection{More visualizations of recovered centroids by different methods on S-sets}
\label{app:more_visualizations}
To further demonstrate the superior performance of our algorithm, we present more visualizations corresponding to the results detailed in \autoref{exp:synthetic_l2} for S-sets(S2) and S-sets(S3). \autoref{fig:synthetic_results_S2S3} displays the centroids recovered by various federated clustering algorithms under the non-IID condition -- Dirichlet($0.3$). Our algorithm's ability to resolve and leverage the structures of local solutions enables it to outperform other baseline methods that fail to address these critical aspects, especially in challenging non-IID settings. This emphasizes the critical role of resolving local solutions for enhanced algorithmic performance.

\begin{figure}[th]
    \centering
    \includegraphics[width=\columnwidth]{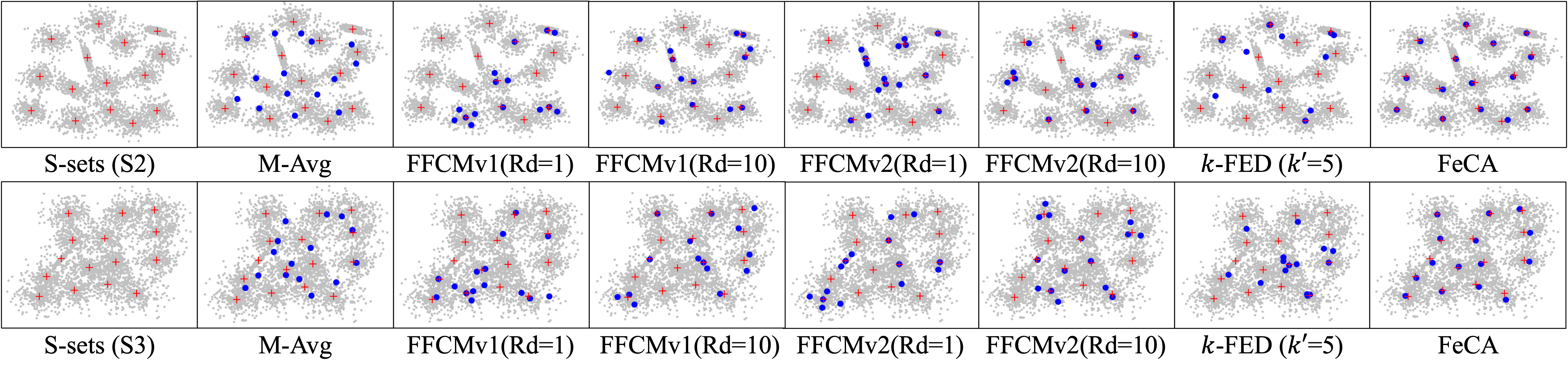}
    \vspace{-.3in}
    \caption{\textbf{Visualizations of S-sets (S2\&S3) and recovered centroids by different methods.} 
    Results are showcased under the Dirichlet($0.3$) data sample scenario. Blue dots represent recovered centroids, and red crosses indicate the ground truth centers.}
    \label{fig:synthetic_results_S2S3}
\end{figure}

\subsection{Ablation study on eliminating one-fit-many centroids in Algorithm \ref{alg:FeCA-ClientUpdate}}
\label{sec:ablation_ofm}

Removing one-fit-many centroids in Algorithm \ref{alg:FeCA-ClientUpdate} plays a crucial role in enhancing the algorithm's performance. These centroids are typically far from any true centers. By eliminating one-fit-many centroids at the client end, we effectively prevent the transmission of these problematic centroids to the server. It significantly simplifies the task of Algorithm \ref{alg:FeCA-ServerAggregation} on the server side, which involves grouping received centroids close to the same true center.

In this section, we conduct an ablation study on eliminating one-fit-many centroids in Algorithm \ref{alg:FeCA-ClientUpdate}. In the following experiments, one-fit-many centroids are not removed on clients and then sent to the server. We present mean square errors between recovered centroids and ground truth centers in \autoref{exp:ablation_ofm}. The comparative results clearly demonstrate a performance degradation when these centroids are not removed, underscoring the significance of eliminating the one-fit-many step in Algorithm \ref{alg:FeCA-ClientUpdate}.

\begin{table}[tbh]
\caption{\textbf{Mean square errors between recovered centroids and true centers on S-sets(S1) under IID data sample scenario.} Values of MSE are scaled by $10^6$. We report mean results from 10 random runs.}
\label{exp:ablation_ofm}
\centering
\footnotesize
\setlength{\tabcolsep}{5pt}
\begin{tabular}{lc}
\toprule
Method & MSE $\times 10^6$\\
\midrule
\FeCA (not removing one-fit-many centroids) & 12977.2\\
\FeCA (removing one-fit-many centroids) & \textbf{6.7}\\

\hline
\end{tabular}
\end{table}

\textbf{Not enough output centroids.} Not removing one-fit-many centroids can lead to a scenario where the number of reconstructed centroids is less than $k$. This occurs because the cluster of one-fit-many centroid typically contains data points from multiple true clusters, resulting in a significantly larger radius than that assigned to the true cluster. Consequently, the server may prioritize these centroids with large radii during the grouping process, forming a large group erroneously containing centroids associated with different true centers. In \autoref{fig:ablation_ofm}, we provide visualizations of reconstructed centroids without removing one-fit-many, demonstrating a notable decrease in performance. This emphasizes the necessity of their removal in our algorithm.

\begin{figure}[htb]
    \centering
    \includegraphics[width=\columnwidth]{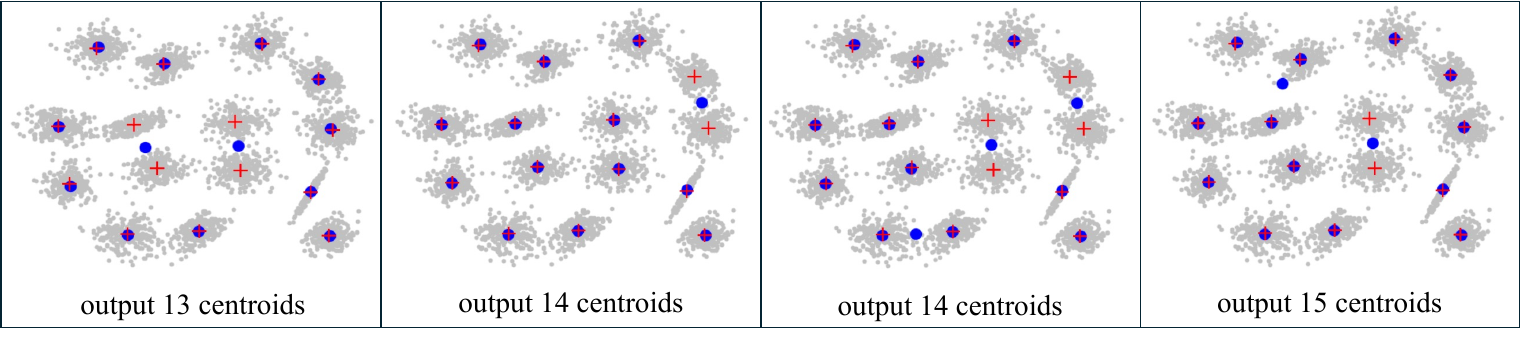}
    \vspace{-.1in}
    \caption{\textbf{Visualizations of recovered centroids by \FeCA (not removing one-fit-many centroids).} Results are showcased under the IID condition on S-sets(S1) with $k^*=15$. Blue dots represent recovered centroids, and red crosses indicate the ground truth centers.}
    \label{fig:ablation_ofm}
\end{figure}

\subsection{Comparison between Algorithm \ref{alg:FeCA-RadiusAssign2} (theoretical) and Algorithm \ref{alg:FeCA-RadiusAssign2-empirical} (empirical)}
\label{sec:appendix_theo_vs_empr}

Algorithm \ref{alg:FeCA-RadiusAssign2} (theoretical) is designed for theoretical analysis only under a strict setup, specifically the Stochastic Ball Model assumption. This assumption allows for the straightforward identification and removal of many-fit-one centroids. However, in practical scenarios, eliminating many-fit-ones is challenging and unnecessary, as they often carry crucial information about the global solution.

In this section, we explore the applicability of Algorithm \ref{alg:FeCA-RadiusAssign2} (theoretical) beyond its constraints by conducting experiments on S-sets under various heterogeneous conditions. Table \ref{exp:theo_vs_empr} presents results, revealing that the performance of Algorithm \ref{alg:FeCA-RadiusAssign2} is suboptimal when assumptions of Stochastic Ball Model are not met, particularly in non-IID cases. This suboptimal performance is due to its reliance on specific assumptions for identifying many-fit-one centroids. Consequently, in practical scenarios, this approach may erroneously eliminate true centroids, leading to less effective outcomes.

\begin{table}[tbh]
\caption{\textbf{$\ell_2$-distance (mean$\pm$std) between recovered centroids and true centers on S-sets(S1) under different data sample scenarios.} Values of $\ell_2$-distance are scaled by $10^4$.}
\label{exp:theo_vs_empr}
\centering
\footnotesize
\setlength{\tabcolsep}{3pt}
\renewcommand{\arraystretch}{1}
\begin{tabular}{lccc}
\toprule
Method & IID & ($0.3$) & ($0.1$) \\
\midrule
\FeCA (theoretical) & 1.1 $\pm 0.1$ & 31.8 $\pm$ 23.8 & 58.9 $\pm$ 15.1 \\

\FeCA (empirical) & \textbf{1.0}$\pm 0.1$ & \textbf{6.8}$\pm 3.7$ & \textbf{22.3}$\pm 15.6$\\
\hline

\end{tabular}
\end{table}

\DIFaddend \subsection{Tuning $k'$ for $k$-FED on the synthetic dataset}
\label{app:tune_k_kFED}
In this section, we provide the results of experiments conducted to select $k'$ for $k$-FED~\cite{dennis2021heterogeneity} on the synthetic dataset. Given that all four subsets of S-sets have the same number of true clusters $k^*=15$, here we utilize S-sets(S1) for tuning $k'$, which has the largest degree of separation. This choice is based on the separation condition mentioned in their paper. We present the $\ell_2$-distance between recovered centroids and true centers for $k'$ values ranging from $2$ to $15$. And results (mean$\pm$std) from $10$ random runs are reported in \autoref{exp:tune_k}. Additionally, we evaluate clustering assignments generated from the recovered centroids using Purity and NMI metrics. The mean results from $10$ random runs are included in \autoref{exp:tune_k}.

Considering $k$-FED is designed for heterogeneous cases, we adopt a Dirichlet(0.3) data sample scenario in our experiments. It is also important to note that when $k'=k^*$, the aggregation step in the $k$-FED algorithm essentially becomes redundant, and the recovered centroids are equivalent to the set of centroids returned by one randomly chosen client.

\begin{table}[tbh]
\caption{\textbf{$\ell_2$-distance (mean$\pm$std) between recovered centroids and true centers, Purity, and NMI on S-sets(S1) under Dirichlet(0.3) data sample scenario.} Values of $\ell_2$-distance are scaled by $10^4$.}
\label{exp:tune_k}
\centering
\footnotesize
\setlength{\tabcolsep}{3pt}
\renewcommand{\arraystretch}{1}
\begin{tabular}{lccccccc}
\toprule
 $k$-FED & $k'=2$ & $k'=3$ & $k'=4$ & $k'=5$ & $k'=6$ & $k'=7$ & $k'=8$ \\
\midrule
$\ell_2$-distance & 63.3$\pm$7.0 & 65.9$\pm$11.9 & 60.8$\pm$16.8 
 & \textbf{58.1$\pm$10.7} & 60.3$\pm$15.0 & 59.4$\pm$11.1 & 62.5$\pm$11.0 \\
Purity & 0.71 & 0.72 & 0.79
 & \textbf{0.79} & 0.78 & 0.79 & 0.76 \\
NMI & 0.83 & 0.83 & \textbf{0.89} 
 & 0.88 & 0.87 & 0.88 & 0.87 \\
\midrule
\midrule
  $k$-FED & $k'=9$ & $k'=10$ & $k'=11$ & $k'=12$ & $k'=13$ & $k'=14$ & $k'=15$ \\
 \midrule
 $\ell_2$-distance & 77.3$\pm$11.2 & 69.4$\pm$13.8 & 77.7$\pm$15.2 & 76.4$\pm$10.7 & 72.1$\pm$16.1 & 69.8$\pm$15.7 & 67.4$\pm$9.5 \\
 Purity & 0.72 & 0.74 & 0.71
 & 0.72 & 0.74 & 0.75 & 0.78 \\
NMI & 0.83 & 0.84 & 0.83 
 & 0.85 & 0.85 & 0.86 & 0.85 \\
\hline
\end{tabular}
\end{table}

\begin{figure}[htb]
    \centering
    \includegraphics[width=.8\columnwidth]{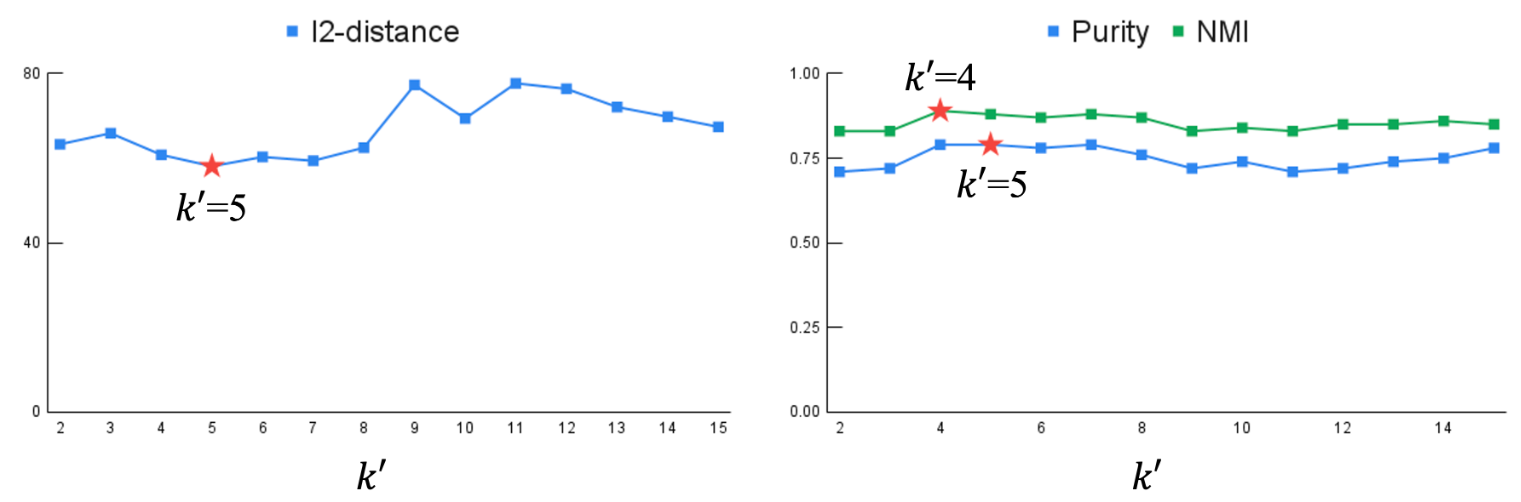}
    \caption{\textbf{Illustrations of $\ell_2$-distance means, Purity and NMI detailed in \autoref{exp:tune_k}.}}
    \label{fig:tuning_k}
\end{figure}

\subsection{Presence of local solutions in $k$-means}
\label{app:more_visualizations_centralized}
In this section, we perform centralized $k$-means on $50\%$, $75\%$, and $100\%$ of data from S-sets.
In \autoref{fig:local_solution_centralized_obj}, we illustrate objective values calculated as \autoref{eq_Kmeans} using $10$ random seeds. As depicted, Lloyd's algorithm frequently converges to local solutions with objective values significantly larger than that of global solutions. This empirical result demonstrates the presence of local solutions with poor performance is independent of the data sample size. Also, it highlights the necessity of our algorithm, which is specifically designed to address and resolve these suboptimal local solutions.
\begin{figure}[th]
    \centering
    \includegraphics[width=\columnwidth]{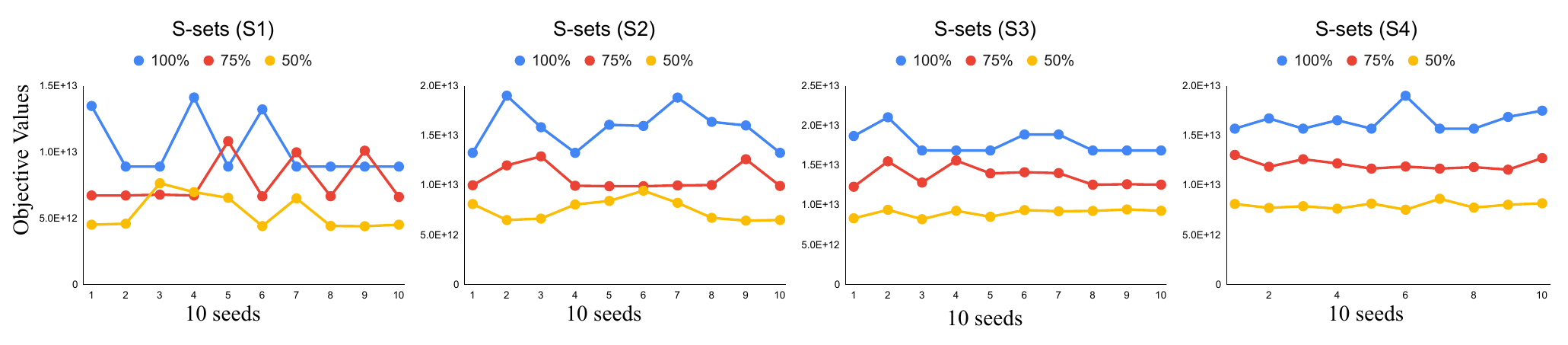}
    \vspace{-.2in}
    \caption{\textbf{Objective values of centralized $k$-means using $100\%$, $75\%$, and $50\%$ of data from S-sets.}}
    \label{fig:local_solution_centralized_obj}
\end{figure}

To better understand the structures of local solutions, we visualize some local solutions of centralized $k$-means on S-sets, shown in \autoref{fig:local_solution_centralized}. This visualization reveals that despite variations in data separations, spurious local solutions exhibit structures as discussed in \autoref{sec:alg_local}, containing both one-fit-many and many-fit-one centroids. This emphasizes the necessity of the steps in our algorithm that discard one-fit-many and aggregate many-fit-one centroids.

\begin{figure}[th]
    \centering
    \includegraphics[width=\columnwidth]{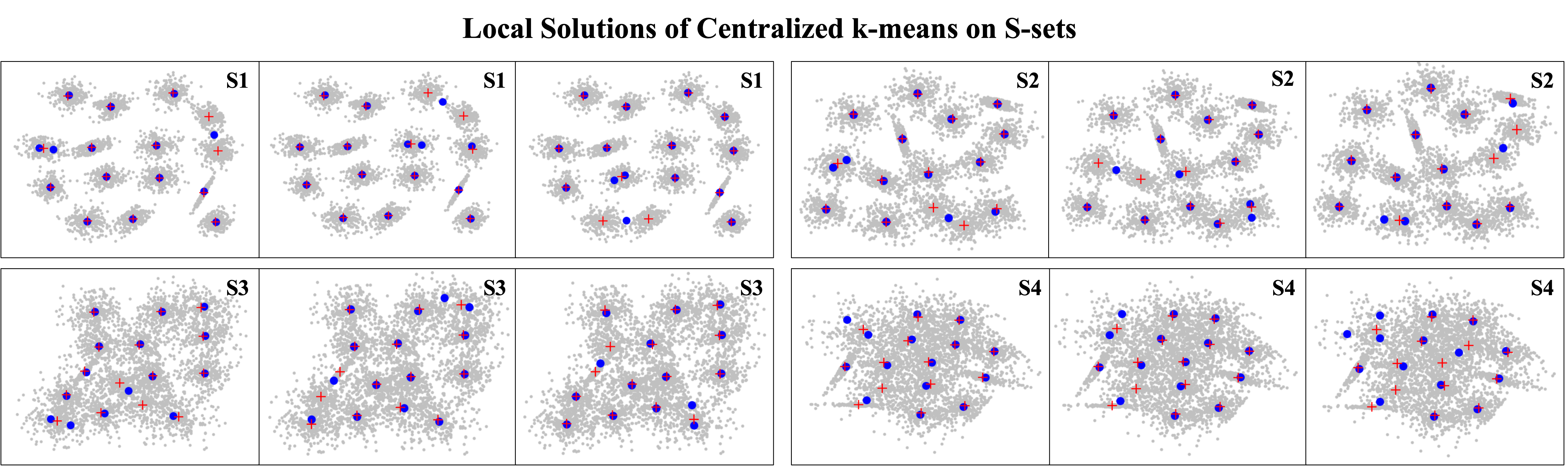}
    \caption{\textbf{Visualizations of structured local solutions of centralized $k$-means on S-sets.}
    \label{fig:local_solution_centralized}}
\end{figure}

\textbf{Almost-empty cases.} As outlined in \citet{qian2021structures}, local solutions of k-means can be composed of one/many-fit-one, one-fit-many and almost-empty centroids. The first two have been discussed in detail in \autoref{sec:alg_local}. Addressing almost-empty cases involves identifying centroids that are far from any true centers and its cluster is almost empty with a small measure. This typically occurs when the dataset contains isolated points that are significantly far from the true centers. It is worth noting that almost-empty cases are more theoretical than practical, with rare occurrences in empirical experiments.

However, if such a case does occur, our algorithm can handle it in the aggregation step by Algorithm \ref{alg:FeCA-ServerAggregation}. Centroids from almost-empty clusters can be treated as noisy data and discarded during the grouping process. Since they are distant from true centers and other received centroids, they do not contribute meaningfully to the final grouping.

\section{Discussion on varying numbers of clients}
\label{app:varying_clients}

In this section, we explore the impact of varying client numbers on our federated approach. We conduct experiments by allocating a fixed dataset portion (5\% randomly sampled from the entire dataset) to each client and then perform our algorithm \FeCA across varying numbers of clients. We first evaluate the recovered centroids by calculating $\ell_2$-distance between these centroids and the ground truth centers. Subsequently, we apply a one-step Lloyd's algorithm using the recovered centroids for initialization and then evaluate the clustering assignments by calculating Purity and NMI. Results for the S-sets (S1) are presented in \autoref{exp:app_vary_client_1}. 
It is noteworthy that only centralized $k$-means clustering is performed when the number of clients is one. In such cases, centralized $k$-means often results in large $\ell_2$-distance due to convergence to local optima with suboptimal performance. 

The findings presented in \autoref{exp:app_vary_client_1} are visually depicted in \autoref{fig:app_vary_client_1} (left), where a trend of decreasing  $\ell_2$-distance is observed as the number of clients $M$ increases. This trend indicates that collaboration among multiple clients can significantly mitigate the negative impact of local solutions. Specifically, when a client encounters a local minimum, integrating benign results from other clients can help alleviate this issue. This collaborative mechanism underscores the effectiveness of federated approaches in improving performance by leveraging the distributed nature of client contributions.

\begin{table}[!h]
\caption{\textbf{$\ell_2$-distance (mean$\pm$std) between recovered centroids and true centers, Purity, and NMI with different numbers of clients $M$.} Values of $\ell_2$-distance are scaled by $10^4$. Each client possesses 5\% data of S-sets(S1).}
\label{exp:app_vary_client_1}
\centering
\footnotesize
\setlength{\tabcolsep}{3pt}
\renewcommand{\arraystretch}{1}
\begin{tabular}{lcccccccccc}
\toprule

 \FeCA & $M=1$ & $M=2$ & $M=3$ & $M=4$ & $M=5$ & $M=6$ & $M=7$ & $M=8$ & $M=9$ & $M=10$ \\
\midrule
$\ell_2$-distance & 10.7$\pm$14.3 & 3.1$\pm$0.6 & 2.6$\pm$0.4 & 2.1$\pm$0.3 & 2.0$\pm$0.3 & 1.7$\pm$0.3 & 1.8$\pm$0.2 & 1.6$\pm$0.4 & 1.4$\pm$0.2 & 1.4$\pm$0.3 \\

Purity & 0.98 & 0.99 & 0.99 & 0.99 & 0.99 & 0.99 & 0.99 & 0.99 & 0.99 & 0.99\\

NMI & 0.98 & 0.99 & 0.99 & 0.99 & 0.99 & 0.99 & 0.99 & 0.99 & 0.99 & 0.99\\
\midrule
\midrule
  \FeCA & $M=11$ & $M=12$ & $M=13$ & $M=14$ & $M=15$ & $M=16$ & $M=17$ & $M=18$ & $M=19$ & $M=20$ \\
 \midrule
 $\ell_2$-distance & 1.3$\pm$0.2 & 1.4$\pm$0.3 & 1.3$\pm$0.2 & 1.4$\pm$0.2 & 1.2$\pm$0.2 & 1.2$\pm$0.1 & 1.1$\pm$0.1 & 1.1$\pm$0.1 & 1.0$\pm$0.1 & 1.0$\pm$0.1 \\
 Purity & 0.99 & 0.99 & 0.99 & 0.99 & 0.99 & 0.99 & 0.99 & 0.99 & 0.99 & 0.99\\
NMI & 0.99 & 0.99 & 0.99 & 0.99 & 0.99 & 0.99 & 0.99 & 0.99 & 0.99 & 0.99\\
\hline
\end{tabular}
\end{table}

Additionally, we assess the impact of varying client numbers in a standard federated setting, particularly under IID data sample scenario for S-sets(S1). In the following experiments, with $M$ denoting the number of clients, each client is allocated $\frac{1}{M}$ of the data points from S-sets(S1). We note that centralized $k$-means is performed when $M=1$ on the entire dataset. We evaluate the performance of our algorithm \FeCA by reporting the $\ell_2$-distance between recovered centroids and true centers, alongside the Purity and NMI of clustering assignments, detailed in \autoref{exp:app_vary_client_2}. Moreover, \autoref{fig:app_vary_client_1} (right) features visual representations of the $\ell_2$ distance results, emphasizing the robustness of our federated algorithm \FeCA across varying numbers of clients.

\begin{table}[!h]
\caption{\textbf{$\ell_2$-distance (mean$\pm$std) between recovered centroids and true centers, Purity, and NMI with different numbers of clients $M$.} Values of $\ell_2$-distance are scaled by $10^4$. Each client possesses $\frac{1}{M}$ data of S-sets(S1).}
\label{exp:app_vary_client_2}
\centering
\footnotesize
\setlength{\tabcolsep}{3pt}
\renewcommand{\arraystretch}{1}
\begin{tabular}{lcccccccccc}
\toprule

 \FeCA & $M=1$ & $M=2$ & $M=3$ & $M=4$ & $M=5$ & $M=6$ & $M=7$ & $M=8$ & $M=9$ & $M=10$ \\
\midrule
$\ell_2$-distance & 14.3$\pm$18.7 & 1.1$\pm$0.3 & 1.2$\pm$0.3 & 1.1$\pm$0.2 & 1.1$\pm$0.2 & 1.2$\pm$0.2 & 1.1$\pm$0.1 & 1.1$\pm$0.1 & 1.0$\pm$0.1 & 1.0$\pm$0.1 \\

Purity & 0.97 & 0.99 & 0.99 & 0.99 & 0.99 & 0.99 & 0.99 & 0.99 & 0.99 & 0.99\\

NMI & 0.98 & 0.99 & 0.99 & 0.99 & 0.99 & 0.99 & 0.99 & 0.99 & 0.99 & 0.99\\
\midrule
\midrule
  \FeCA & $M=11$ & $M=12$ & $M=13$ & $M=14$ & $M=15$ & $M=16$ & $M=17$ & $M=18$ & $M=19$ & $M=20$ \\
 \midrule
 $\ell_2$-distance & 1.1$\pm$0.1 & 1.1$\pm$0.1 & 1.1$\pm$0.2 & 1.1$\pm$0.1 & 1.1$\pm$0.1 & 1.0$\pm$0.2 & 1.1$\pm$0.2 & 1.1$\pm$0.1 & 1.1$\pm$0.1 & 1.1$\pm$0.1 \\
 Purity & 0.99 & 0.99 & 0.99 & 0.99 & 0.99 & 0.99 & 0.99 & 0.99 & 0.99 & 0.99\\
NMI & 0.99 & 0.99 & 0.99 & 0.99 & 0.99 & 0.99 & 0.99 & 0.99 & 0.99 & 0.99\\

\hline
\end{tabular}
\end{table}

\begin{figure}[!h]
    \centering
    \includegraphics[width=\columnwidth]{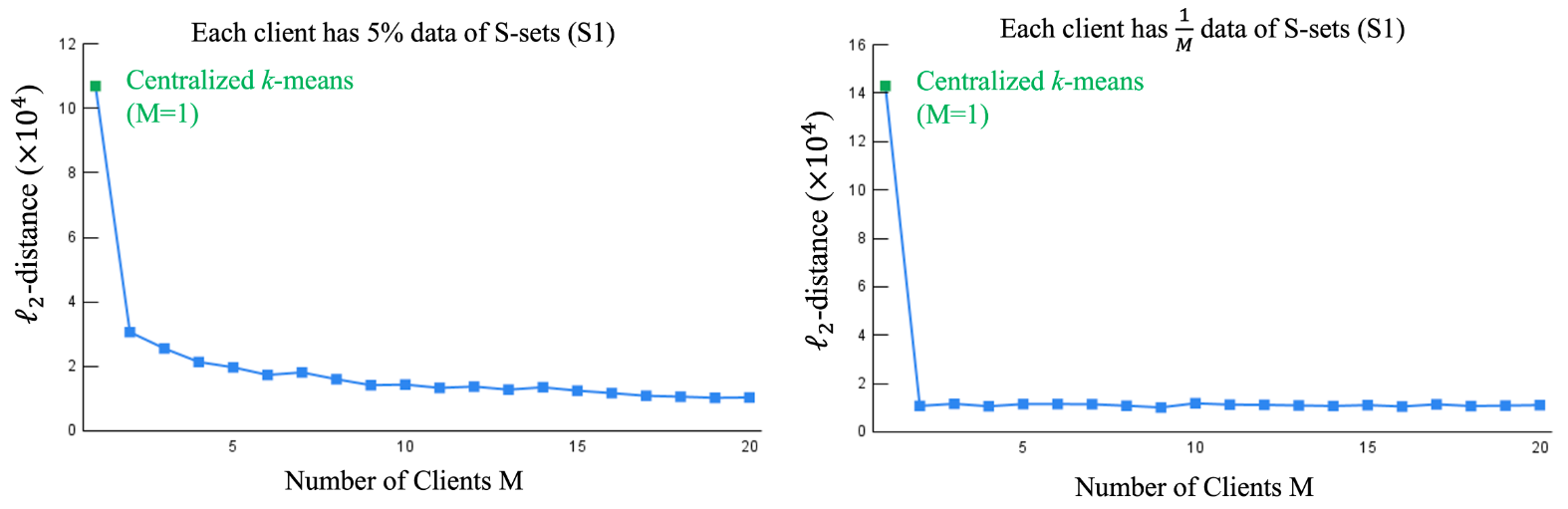}
    \vspace{-.1in}
    \caption{\textbf{Illustrations of $\ell_2$-distance in \autoref{exp:app_vary_client_1} and \autoref{exp:app_vary_client_2}.}}
    \label{fig:app_vary_client_1}
\DIFaddbeginFL \end{figure}

\section{Discussion on varying $k$}
\label{sec:vary_k}

The selection of $k$ is an important aspect of the $k$-means problem. In this section, we present supplementary experiments investigating the performance of our algorithm with varying $k$. For clarity, in this study, we use $k$ to denote the number of recovered centroids desired by our algorithm, which is also the number of output centroids by Algorithm \ref{alg:FeCA}. $k'$ denotes the parameter used to perform $k$-means clustering on clients in Algorithm \ref{alg:FeCA-ClientUpdate}, while $k^*$ indicates the number of true centers.
In the following, we present experiments with varying values of $k'$ and $k$, respectively.

\textbf{Varying values of $k'$.}
We perform \FeCA on S-sets(S1) under three data sample scenarios, varying values of $k'$. We chose $k'=10$ (undershoot case) and $k'=20$ (overshoot case) with $k^*=15$. Mean square errors between recovered centroids and ground truth centers are shown in \autoref{exp:app_vary_k}. Note that for undershoot cases, the number of recovered centroids $k$ might be less than $k^*$. In these instances, we identify the top $k$ matching centroids with recovered centroids from $k^*$ true centers and calculate the mean square error between them. Visualizations of clustering results on different clients and \FeCA's recovered centroids are illustrated in the following figures.

\begin{table}[htp]
\caption{\textbf{Mean square errors between recovered centroids and true centers on S-sets(S1) under three data sample scenarios.} Values of MSE are scaled by $10^6$. Dirichlet($0.3$) and Dirichlet($0.1$) are denoted as ($0.3$) and ($0.1$), respectively. We report mean results from $10$ random runs.}
\vspace{.1in}
\label{exp:app_vary_k}
\centering
\footnotesize
\setlength{\tabcolsep}{10pt}
\begin{tabular}{lcccccc}
\toprule
Methods & $k$ & $k'$ & $k^*$ & MSE - IID & MSE - ($0.3$) & MSE - ($0.1$) \\
\midrule
\FeCA - Varying $k'$ (undershoot case) & 15 & 10 & 15 & 34599.4 & 21055.7 & 19311.6 \\
\FeCA - Varying $k'$ (overshoot case)& 15 & 20 & 15 & 9.9 & 5960.4 & 8738.6 \\
\FeCA - Varying $k$ & 10 & 15 & 15 & 8.4 & 551.2 & 7532.7 \\
\FeCA  & 15 & 15 & 15 & \textbf{6.7} & \textbf{308.3} & \textbf{3315.3} \\

\bottomrule
\end{tabular}
\end{table}

When $k'\neq k^*$, even if clients converge to the global solution, this global solution exhibits similar structures (one-fit-many and many-fit-one) to those observed in local solutions when $k'=k^*$. As shown in \autoref{fig:vary_k_k'10}, when $k'=10$, clients' clustering results demonstrate the presence of one-fit-many, while $k'=20$ showcases many-fit-one structures illustrated in \autoref{fig:vary_k_k'20}. Therefore, this underscores the necessity of our algorithm, as it effectively addresses such structured solutions.

In the undershoot case ($k'=10<k^*$), clients' clustering results might contain multiple one-fit-many but no many-fit-one centroids. This complicates the elimination of one-fit-many centroids in Algorithm \ref{alg:FeCA-ClientUpdate}, because identifying one-fit-many becomes challenging without many-fit-one centroids for reference. Under non-IID conditions, this might occur in extreme undershoot cases with rather small $k'$, as clients' data may concentrate on partial true clusters. As discussed in \autoref{sec:ablation_ofm}, if one-fit-many centroids are not eliminated on clients, the number of output centroids $k$ may be less than $k^*$, shown in \autoref{fig:vary_k_k'10}.

In contrast, in the overshoot case ($k'=20>k^*$), the clients' clustering results include multiple many-fit-one centroids but no one-fit-many centroids. Since our algorithm effectively addresses many-fit-one centroids in the aggregation step on the server, \FeCA can still accurately approximate the true centers, as demonstrated in \autoref{fig:vary_k_k'20}. This is because the elimination of many-fit-one centroids on the clients' side is unnecessary; these centroids can contribute meaningfully to the grouping process on the server side, as discussed in \autoref{sec:radius_assign_alg}.

\textbf{Varying values of $k$.}
We perform \FeCA on S-sets(S1) under three data sample scenarios, selecting $k'=k^*=15$, and $k=10$. Mean square errors between recovered centroids and ground truth centers are reported in \autoref{exp:app_vary_k}. When $k<k^*$, Algorithm \ref{alg:FeCA-ServerAggregation} outputs the mean centroids of the top $k$ groups containing the largest number of elements. As illustrated in \autoref{fig:vary_k_k10} and \autoref{fig:vary_k_k10_niid}, with $k<k^*$, the recovered centroids can still accurately approximate partial true centers.

\begin{figure}[htb]
    \centering
    \includegraphics[width=\columnwidth]{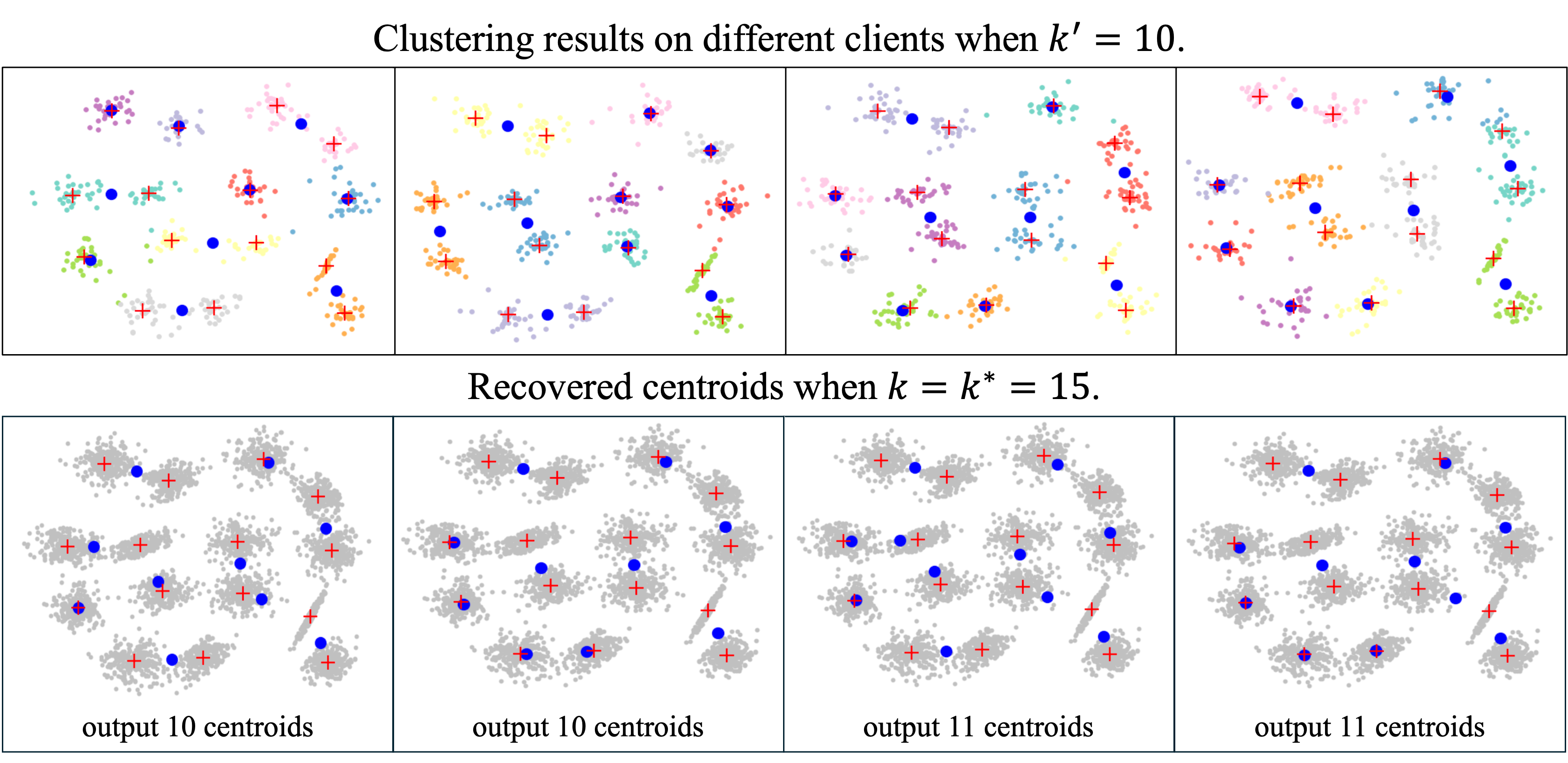}
    \vspace{-.1in}
    \caption{\textbf{Clients' clustering results and \FeCA's outputs under IID condition.}}
    \label{fig:vary_k_k'10}
\end{figure}

\begin{figure}[htb]
    \centering
    \includegraphics[width=\columnwidth]{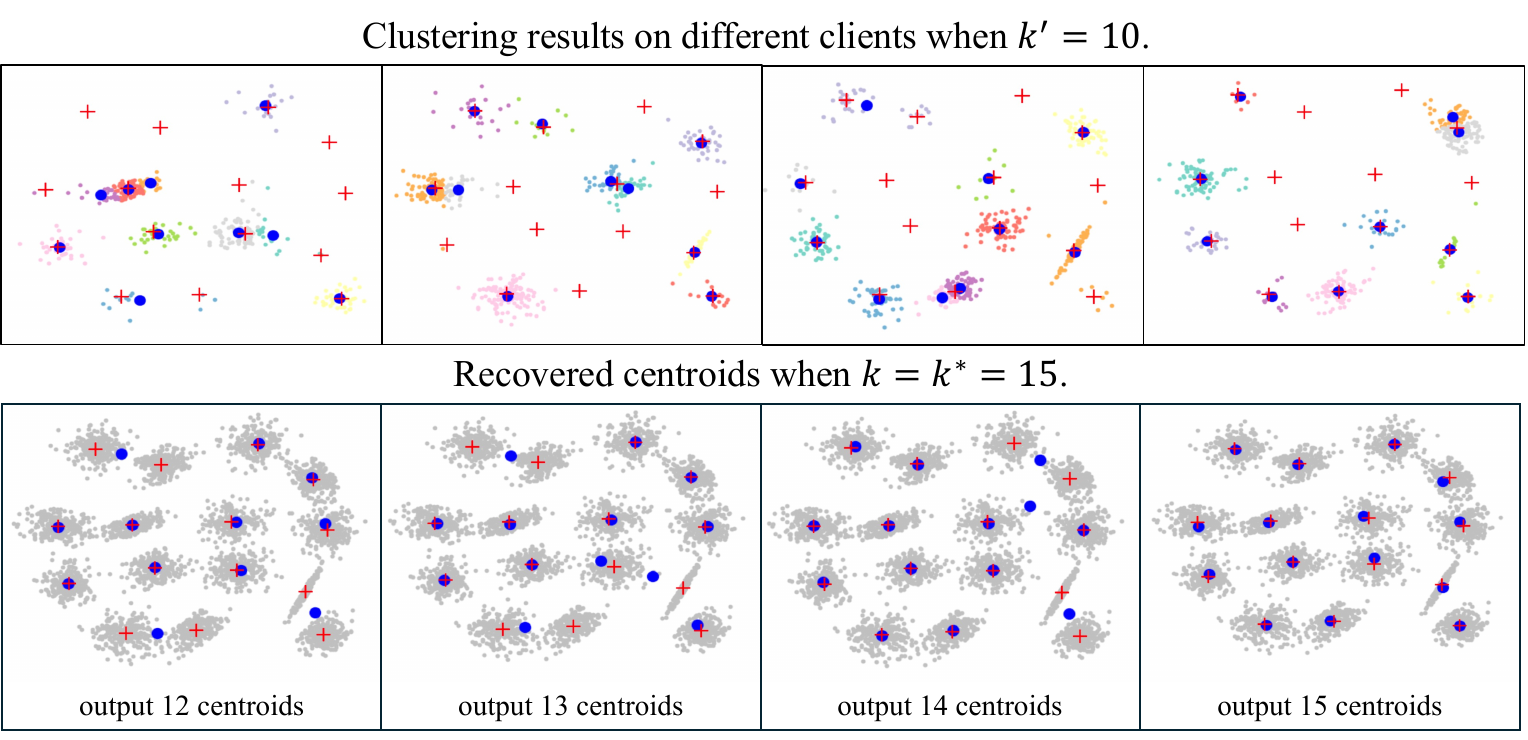}
    \vspace{-.1in}
    \caption{\textbf{Clients' clustering results and \FeCA's outputs under non-IID condition - ($0.3$).}}
    \label{fig:vary_k_k'10_niid}
\end{figure}

\begin{figure}[htb]
    \centering
    \includegraphics[width=\columnwidth]{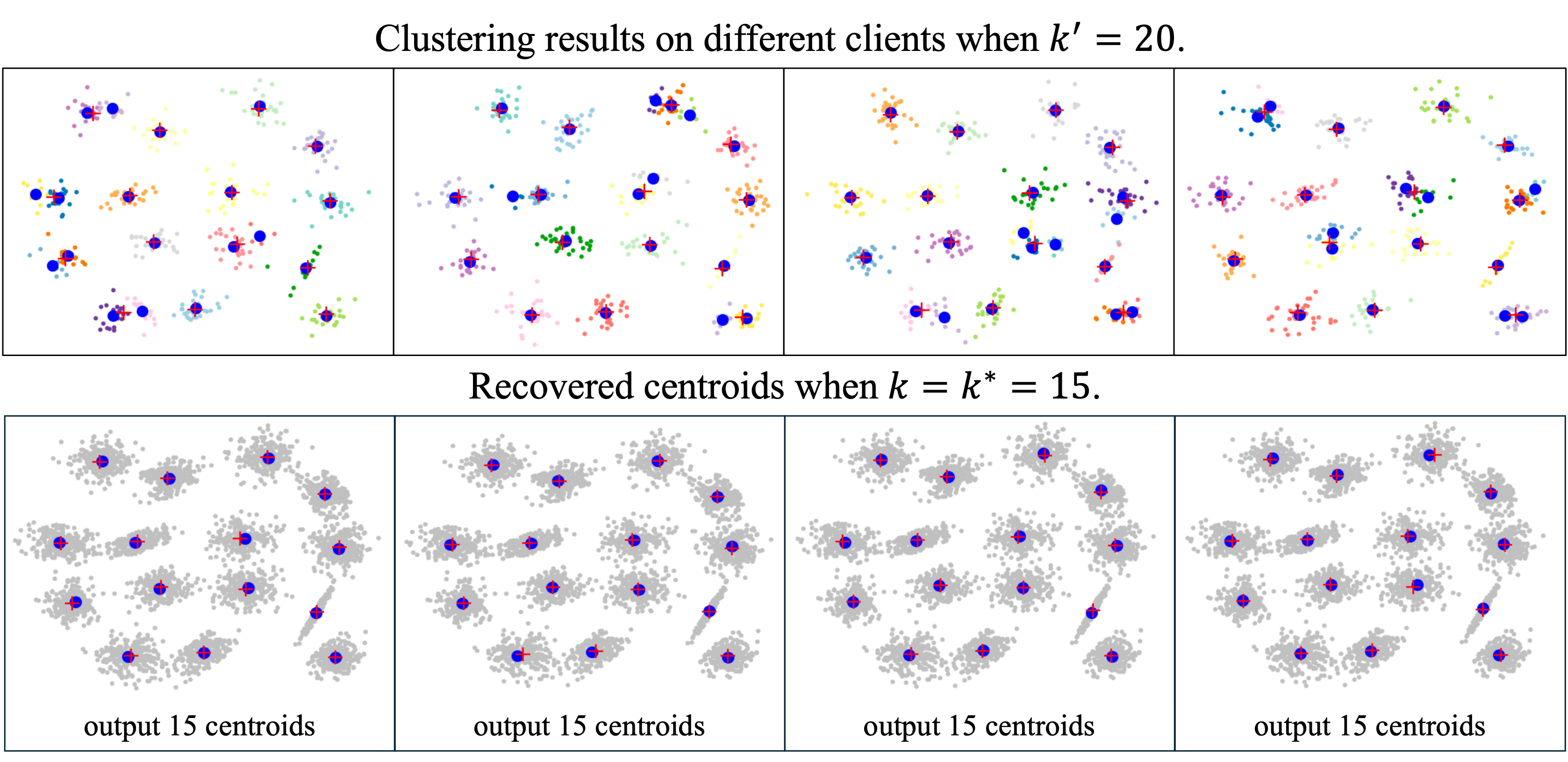}
    \vspace{-.1in}
    \caption{\textbf{Clients' clustering results and \FeCA's outputs under IID condition.}}
    \label{fig:vary_k_k'20}
\end{figure}

\begin{figure}[htb]
    \centering
    \includegraphics[width=\columnwidth]{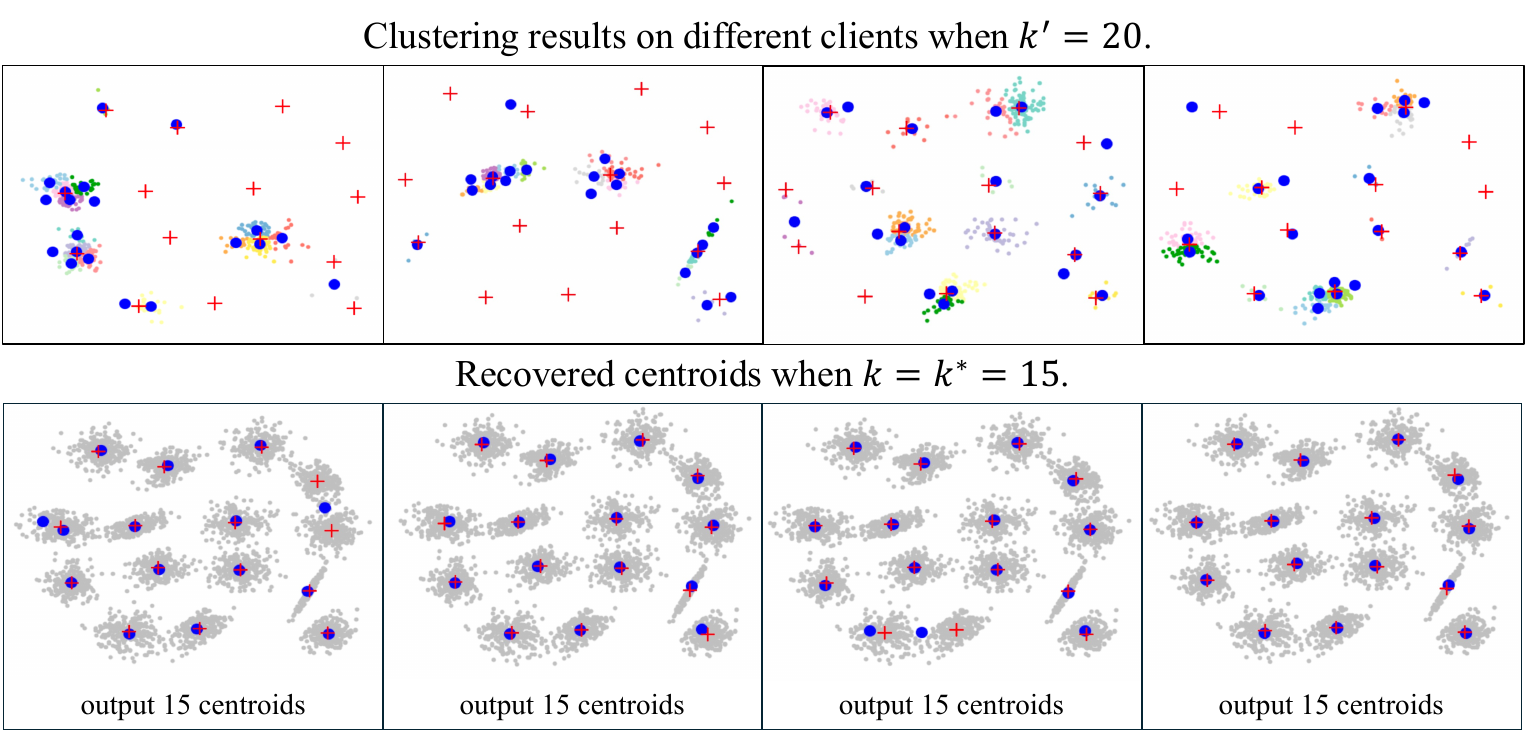}
    \vspace{-.1in}
    \caption{\textbf{Clients' clustering results and \FeCA's outputs under non-IID condition - ($0.3$).}}
    \label{fig:vary_k_k'20_niid}
\end{figure}

\begin{figure}[htp]
    \centering
    \includegraphics[width=\columnwidth]{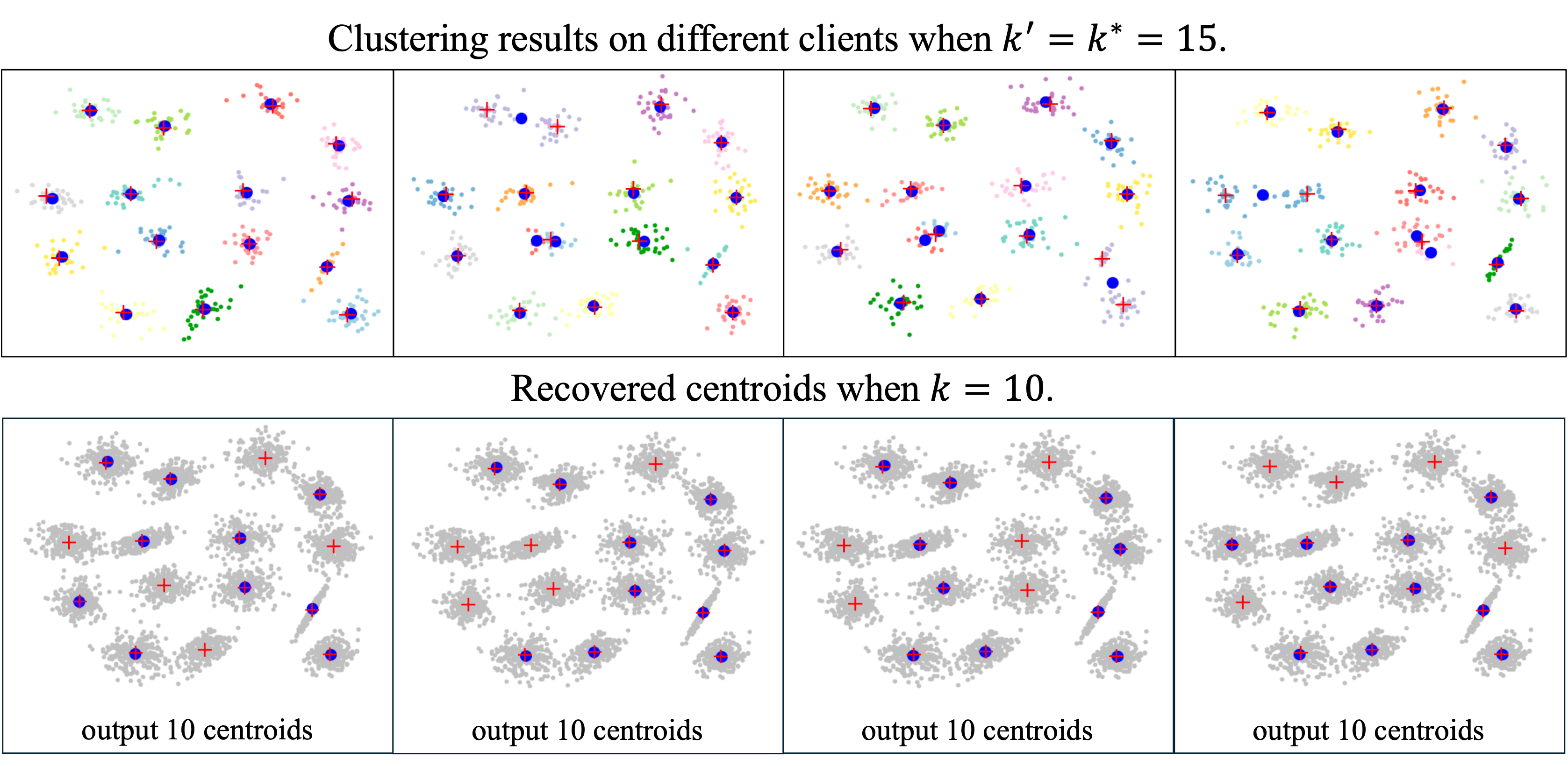}
    \vspace{-.1in}
    \caption{\textbf{Clients' clustering results and \FeCA's outputs under IID condition.}}
    \label{fig:vary_k_k10}
\DIFaddendFL \end{figure}

\begin{figure}[htp]
    \centering
    \includegraphics[width=\columnwidth]{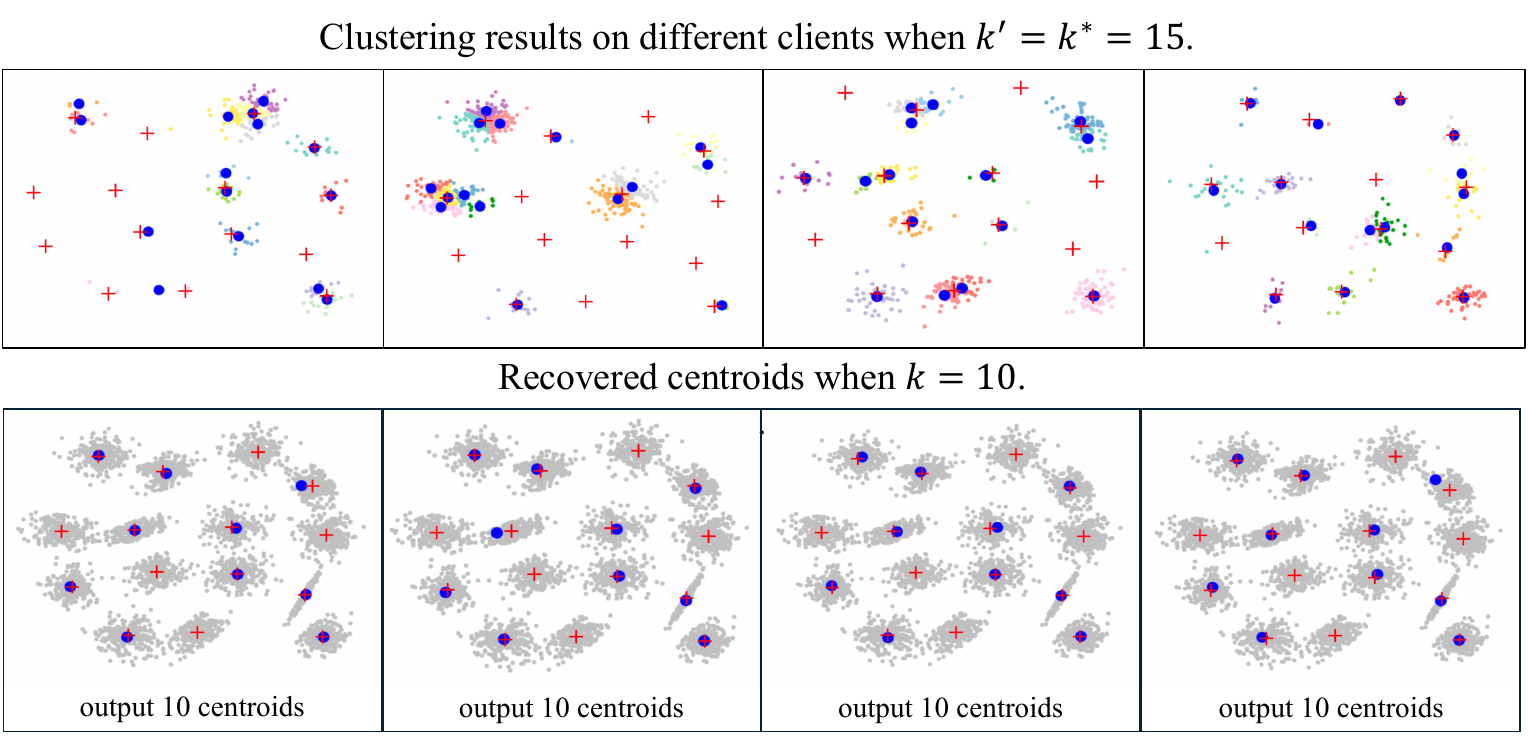}
    \vspace{-.1in}
    \caption{\textbf{Clients' clustering results and \FeCA's outputs under non-IID condition - ($0.3$).}}
    \label{fig:vary_k_k10_niid}
\DIFaddendFL \end{figure}

\end{document}